\def\eqref#1{equation~\ref{#1}}
\def\1{\bm{1}}
\DeclareMathAlphabet{\mathsfit}{\encodingdefault}{\sfdefault}{m}{sl}
\SetMathAlphabet{\mathsfit}{bold}{\encodingdefault}{\sfdefault}{bx}{n}
\definecolor{darkblue}{rgb}{0, 0.12, 0.55}
\definecolor{darkgreen}{rgb}{0, 0.55, 0.12}
\definecolor{darkred}{rgb}{0.6,0,0}
\definecolor{darkgreen}{rgb}{0,0.6,0}
\definecolor{Gray}{gray}{0.9}
\newtheorem{condition}{Condition}
\definecolor{mymauve}{rgb}{0.58,0,0.82}
\newcommand{\resolved}[1]{}
\newcommand{\com}[1]{}
\newcommand{\Votek}{Vote-$k$\xspace}
\newlength{\mysize}
\newtheorem{thm}{Theorem}
\newtheorem{lem}[]{Lemma}
\newtheorem{prop}[thm]{Proposition}
\theoremstyle{definition}
\DeclareMathOperator*{\argmax}{arg\,max}
\title{IDEAL: Influence-Driven Selective Annotations Empower In-Context Learners in Large Language Models}
\author{Shaokun Zhang$^{1}$\thanks{Equal contributions.} \ \ \ \ \ Xiaobo Xia$^{2}$$^*$\thanks{Corresponding authors.} \ \ \ \ \ Zhaoqing Wang$^{2}$ \ \ \ \ \ Ling-Hao Chen$^{3}$ \ \ \ \ \ \\
\textbf{Jiale Liu}$^{4}$ \ \ \ \ \ \textbf{Qingyun Wu}$^{1}$$^{\dag}$ \ \ \ \ \ \textbf{Tongliang Liu}$^{2}$ \\
$^1$Pennsylvania State University \quad\quad
$^2$The University of Sydney \\
$^3$Tsinghua University \quad\quad
$^4$Xidian University \\
\texttt{shaokun.zhang@psu.edu} \ \ \ \ 
\texttt{xiaoboxia.uni@gmail.com} \ \\
\texttt{derrickwang005@gmail.com} \ \ \ \  
\texttt{thu.lhchen@gmail.com} \ \ \ \   
\texttt{leoljl.xdu@gmail.com} \ \\
\texttt{qingyun.wu@psu.edu} \ \ \ \   
\texttt{tongliang.liu@sydney.edu.au} \ \\
}
\begin{document}

\maketitle

\begin{abstract}
In-context learning is a promising paradigm that utilizes in-context examples as prompts for the predictions of large language models. These prompts are crucial for achieving strong performance. However, since the prompts need to be sampled from a large volume of annotated examples, finding the right prompt may result in high annotation costs. To address this challenge, this paper introduces an influence-driven selective annotation method that aims to minimize annotation costs while improving the quality of in-context examples. The essence of our method is
to select a pivotal subset from a large-scale unlabeled data pool to annotate for the subsequent sampling of prompts. Specifically, a directed graph is first constructed to represent unlabeled data. Afterward, the influence of candidate unlabeled subsets is quantified with a diffusion process. A simple yet effective greedy algorithm for unlabeled data selection is lastly introduced. It iteratively selects the data if it provides a maximum marginal gain with respect to quantified influence. Compared with previous efforts on selective annotations, our influence-driven method works in an end-to-end manner, avoids an intractable explicit balance between data diversity and representativeness, and enjoys theoretical support. Experiments confirm the superiority of the proposed method on various benchmarks, achieving better performance under lower time consumption during subset selection. The project page is available at \href{https://skzhang1.github.io/IDEAL/}{https://skzhang1.github.io/IDEAL/}.

\end{abstract}

\section{Introduction}

In-context learning~(ICL) entails presenting a small set of examples with demonstrations as prompts (called in-context examples) to large language models (LLMs), before making predictions on test inputs~\cite{wei2021finetuned,min2022rethinking,akyurek2023learning}. This emerging few-shot learning paradigm is an appealing alternative to supervised fine-tuning as it can avoid heavy parameter updates of language models while improving accuracy~\cite{liu2021makes,yoo2022ground}.

Recent studies indicate that obtaining prompts from a vast collection of annotated examples is crucial to achieving strong performance~\cite{rubin2022learning}. Notably, these studies have illuminated the substantial performance improvements when retrieving analogous examples (under specific embedding criteria) as in-context examples tailored for each individual test input. Since different test scenarios need distinct in-context examples, and each of them is equipped with its pertinent annotations, the necessity of a large volume of annotated examples is emphasized~\cite{su2022selective}. However, obtaining large-scale annotated examples for ICL requires substantial manpower and financial resources. This is because humans not only need to annotate the true label for each example but also need to provide the example demonstration in the annotation process~\cite{wei2022chain}. 

\begin{figure}[h]
    \centering
    \begin{subfigure}[b]{0.48\textwidth}
        \includegraphics[width=\textwidth]{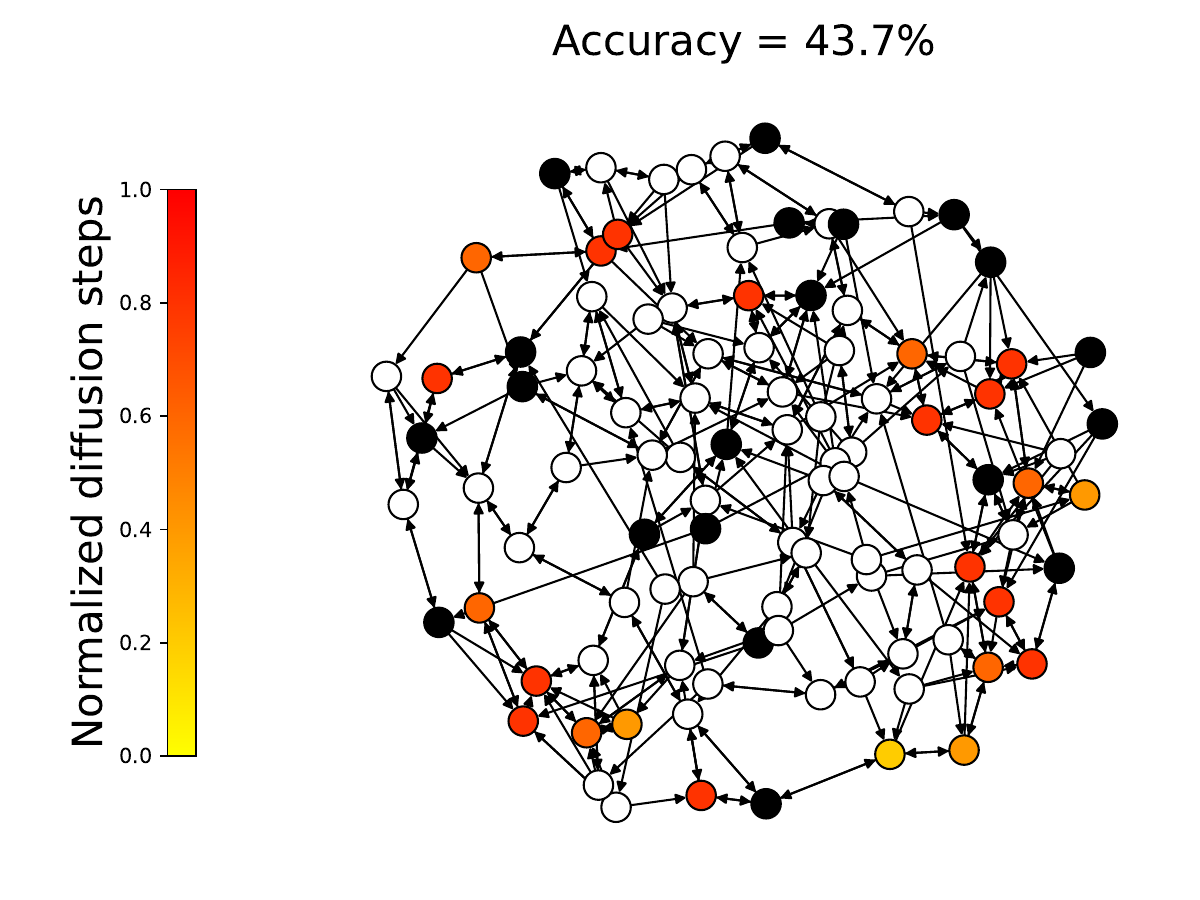}
        \caption{Low-influence subset in unlabeled data.}
    \end{subfigure}
    \hspace{10pt}
    \begin{subfigure}[b]{0.48\textwidth}
    \includegraphics[width=\textwidth]{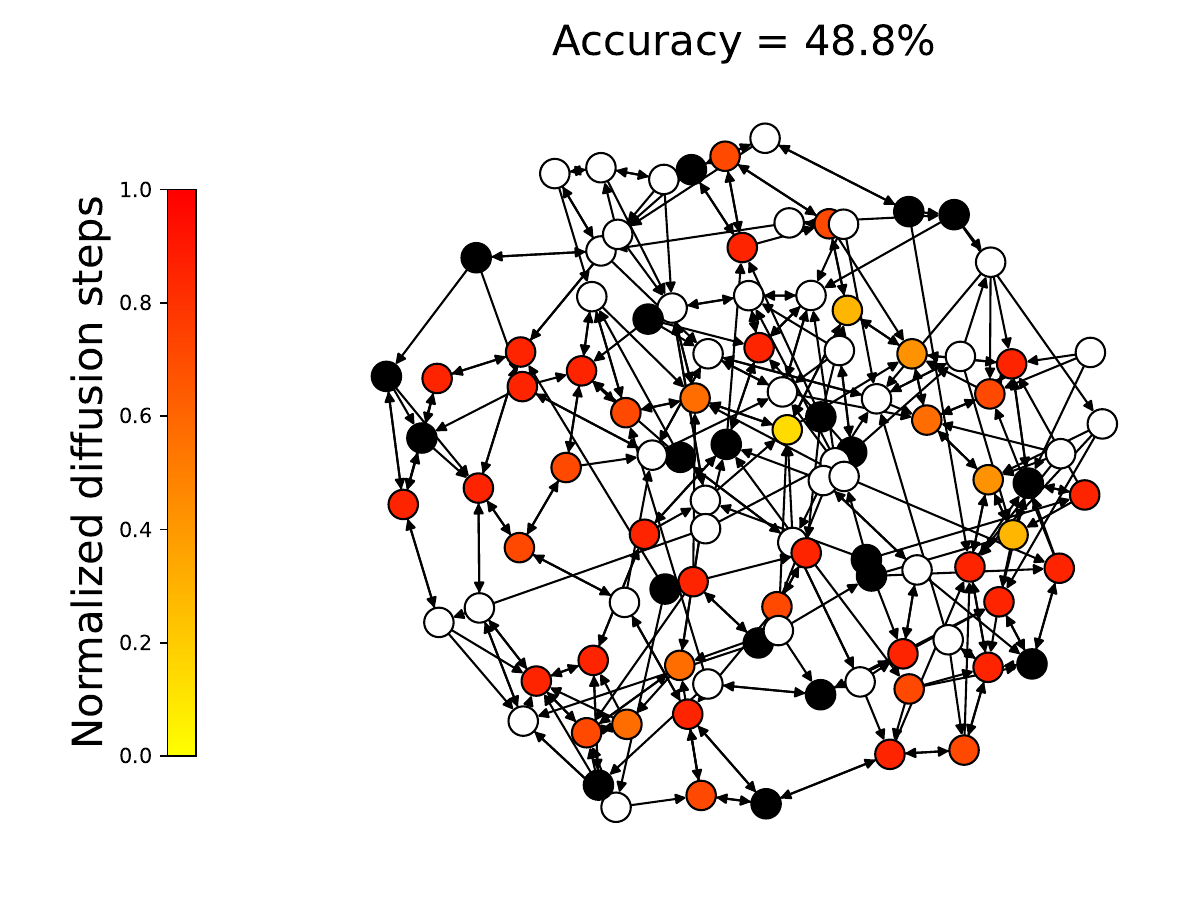}
    \caption{High-influence subset in unlabeled data.}
    \end{subfigure}
    \caption{Visualization of the information diffusion process~\cite{goldenberg2001talk} of two subsets with equal sizes. Experiments are conducted using the SST-5 training set~\cite{socher2013recursive}. To avoid the denseness, we randomly sample 100 examples in total. In this visualization, black nodes present the initial subset without information diffusion. White nodes correspond to the examples that are not influenced by diffusion. For other nodes, darker nodes represent earlier influenced examples. We can observe that the subset with high influence (b) can achieve better performance by influencing a larger group of examples in the unlabeled data pool compared to the subset with low influence (a).}
    \label{fig:demon}
\end{figure}

To reduce the annotation cost, the previous effort Vote-$k$~\cite{su2022selective} made attempts by proposing to select a \textit{diverse} and \textit{representative} subset from a large-scale unlabeled data pool to annotate. Particularly, Vote-$k$ initially selects a small portion of data for diversity and annotates them manually. Then, these annotated data act as prompts for predictions on all other unlabeled data, and choose the remaining ones that need to be annotated, based on diverse confidence scores. However, despite its pretty performance in empirical evaluations, Vote-$k$ is still unsatisfactory in practice. We detail the issues from three aspects. 
(1) The data selection procedure of Vote-$k$ is not end-to-end. This results in inconvenience, increased processing complexity, and added inference costs due to the predictions on unlabeled data. 
(2) Diversity and representativeness need to be balanced carefully~\cite{su2022selective}. Highlighting diversity in data selection is crucial for comprehensive coverage, but may sacrifice representativeness by overlooking exemplary data. Besides, the excessive emphasis on diversity of Vote-$k$ causes the selection of outliers~(see evidence in Appendix~\ref{appendix:umap}). 
(3) Vote-$k$ devoids theoretical guarantees, making it challenging to assess the algorithm's reliability in realistic tasks and constraining its practical utility.

In this paper, to minimize annotation costs for ICL and address the issues of existing work, an innovative data selection method is introduced, where we utilize \underline{i}nfluence-\underline{d}riven s\underline{e}lective \underline{a}nnotations to empower in-context \underline{l}earners~(\textbf{IDEAL}). In essence, IDEAL aims to identify a subset of data that acts as a proxy and closely approximates the vast unlabeled dataset. Once annotated, these selected data can be considered a viable substitute for the large annotated examples in subsequent ICL tasks. In further detail, our method works in an unsupervised and \textit{end-to-end} manner. We first construct a \textit{directed graph}, where its vertices represent unlabeled data and its edges bridge different data based on their similarities. Inspired by influence maximization that aims to select a vertex set at key positions in social graphs~\cite{li2018influence}, we then propose to quantify the influence of each candidate unlabeled subset in our constructed graph, through a classic independent-cascade diffusion model illustrated in Figure~\ref{fig:demon}. To find the subset with high influence, a simple greedy algorithm for unlabeled data selection is introduced. The algorithm does not need a delicate trade-off between diversity and representativeness. Instead, it iteratively selects a vertex if it provides a maximum marginal gain to the influence metric, until the selection is completed based on the annotation budget.

Theoretically, under the influence-driven selective paradigm, we provide the lower bound for the subset influence selected by our method, demonstrating it is at least as large as a certain proportion of the influence of the optimal solution. Empirically, we conduct comprehensive experiments over 9 datasets across diverse tasks (covering classification, commonsense reasoning, dialogue, and text/code generation). Various LLMs and prompt retrieval technologies are included in evaluations. Experimental results demonstrate that our IDEAL can achieve better performance than Vote-$k$, with only 13\% time consumption during subset selection. This creates a strong baseline of selective annotations for follow-up research.

\section{Methodology}

In this section, to reduce the annotation cost of ICL, a framework of influence-driven selective annotations is formulated. 
We discuss how examples should be selected to annotate, leading to better in-context learners for LLMs.

\subsection{Problem setup}
We begin by defining notations and setting up the research problem. Specifically, LLMs perform in-context learning tasks based on a task-specific prompt $\mathbf{Z}=[\mathbf{z}_1,\ldots,\mathbf{z}_c]$, where each $\mathbf{z}_i$ represents one example $(\mathbf{x}_i,y_i)$ consisting of the instance $\mathbf{x}_i$ and label $y_i$, with $c$ examples in total. LLMs generate the prediction for one test input $\mathbf{x}_{\rm{test}}$ conditioned on the prompt $\mathbf{Z}$ followed by $\mathbf{x}_{\rm{test}}$, i.e., $y_{\rm{test}}={\arg\max}_{y\in \mathcal{C}}P(y|\mathbf{Z},\mathbf{x}_{\rm{test}})$, where $\mathcal{C}$ denotes the label space. As each prompt needs distinct annotations, the importance of having a substantial number of annotated examples is stressed, resulting in huge annotation costs. This motivates us to investigate selective annotations. 

Given a pool of unlabeled instances $\mathcal{D}_{\rm{u}}=\{\mathbf{x}_i\}_{i=1}^n$, where $n$ is the number of unlabeled instances, the aim of selective annotations is to select a subset $\mathcal{S}_{\rm{u}}\subset\mathcal{D}_{\rm{u}}$ to make manual annotations, such that performing ICL using prompts retrieved from the selected subset can yield good performance on an unseen test set $\mathcal{D}_{\rm{test}}$. The size of $\mathcal{S}_{\rm{u}}$ is controlled by the annotation budget $m$, i.e., $|\mathcal{S}_{\rm{u}}|=m$.

\subsection{Influence-driven selective annotations}
\label{sec:influence-driven}

\textbf{Overview.} For selective annotations in ICL, we need to identify a subset that approximates vast unlabeled data. Therefore, quantifying the coverage of each candidate subset is critical. To achieve this, we construct a directed graph using the embeddings of unlabeled data and portray their relationships using the edges in the graph. We then quantify the influence of each candidate subset in the constructed graph. An information diffusion model is used for this purpose. After the quantification, we can search the subset with maximum influence, which most closely approximates the unlabeled data. Below we detail the above procedure step by step.

\textbf{Constructing the directed graph.} 
We first compute a vector embedding for each unlabeled instance using Sentence-BERT~\cite{reimers2019sentence}\footnote{\href{https://huggingface.co/sentence-transformers/all-mpnet-base-v2}{https://huggingface.co/sentence-transformers/all-mpnet-base-v2.}}. The obtained embeddings are employed to build a directed graph $\mathcal{G}=(\mathbf{V},\mathbf{E}, \mathbf{P})$, where the vertices $\mathbf{V}=\{\mathbf{v}_i\}_{i=1}^n$ represent the embeddings of the unlabeled instances, $\mathbf{E}$ denotes the set of edges in the graph, and $\mathbf{P}$ denotes the set of weights assigned to edges. In more detail, for each vertex $\mathbf{v}\in\mathbf{V}$, we connect it to its $k$ nearest successors\footnote{In graph theory~\cite{harary2018graph}, a vertex $\mathbf{u}$ is the successor of a vertex $\mathbf{v}$ if it is at the end of an outgoing directed edge ($\mathbf{v}$,$\mathbf{u}$).} in terms of the cosine similarity between the embeddings and then get $\mathbf{E}$. For the edge $(\mathbf{v},\mathbf{u})\in\mathbf{E}$ that connects $\mathbf{v}$ and its successor $\mathbf{u}$, we assign the weight $p(\mathbf{v},\mathbf{u})=\cos(\mathbf{v},\mathbf{u})/{\sum_{\mathbf{z}\in\mathcal{N}(\mathbf{v},k)}\cos(\mathbf{v},\mathbf{z})}$ with $p\in\mathbf{P}$, where $\mathcal{N}(\mathbf{v},k)$ represents the set including $k$ nearest successors of $\mathbf{v}$, and $\cos(\cdot,\cdot)$ is a function that calculates the cosine similarity of two embeddings. The constructed graph depicts the relationships between unlabeled examples in terms of the embedding similarity. 

\textbf{Quantifying subset influence.} 
\begin{figure}[!t]
    \centering    \includegraphics[width = 0.95\hsize]{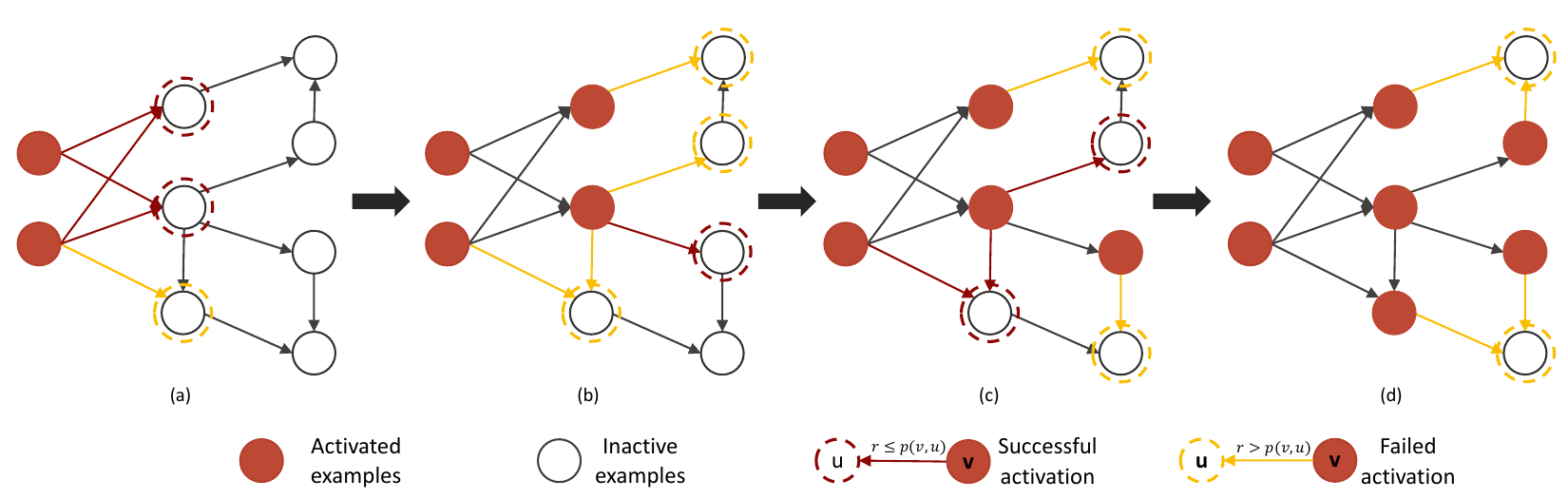}
    \caption{The procedure aims to quantify the influence of each subset of in-context examples. In this procedure, we start with a subset of examples (the red points in (a)). Gradually, the successors of this subset are activated based on the weight $p$ and a random number $r$ sampled from $0$ to $1$. From (a) to (d). The influence of the subset is determined by the number of points that have been activated.}
    \label{fig:system}
\end{figure}
Here we propose to quantify each candidate subset within the constructed graph, which is detailed in Algorithm~\ref{alg:influence_function}. Specifically, given the constructed graph $\mathcal{G}$ and a candidate subset $\mathcal{S}$, the quantification algorithm simulates the \textit{progression of information diffusion} originating from $\mathcal{S}$. The number of influenced vertices can be considered as a measure of the influence of the candidate subset. In other words, the subset that influences more vertices within the 
graph can provide a better approximation of the vast unlabeled data. 
The diffusion process unfolds discretely, progressing through multiple steps. 
\begin{algorithm}[!t]
\caption{Subset influence quantification.}
\label{alg:influence_function}
\SetKwInOut{Input}{Input}
\SetKwInOut{Output}{Output}
\Input{Directed graph $\mathcal{G} = (\mathbf{V}, \mathbf{E}, \mathbf{P})$, subset $\mathcal{S}$.}
\Output{Number of influenced vertices by $\mathcal{S}$ in $\mathcal{G}$.}
$\mathcal{S}_{\rm{active}} \leftarrow \mathcal{S}$, $\mathcal{S}_{\rm{new}}\leftarrow \emptyset$, $L= 0$; \\
\While{$\mathcal{S}_{\rm{active}} \ne \emptyset$}{
    \For{each node $\mathbf{v}$ in $\mathcal{S}_{\rm{active}}$}{
        \For{each successor $\mathbf{u}$ of $\mathbf{v}$ in $\mathcal{G}$}{
            \If{$\mathbf{u}$ not in $\mathcal{S}$}{
                Generate random number $\tau \in [0, 1]$\;
                \If{$\tau \leq p(\mathbf{v},\mathbf{u})$}{
                    $\mathcal{S}\leftarrow \mathcal{S}\cup\mathbf{u}$; $\mathcal{S}_{\rm{new}}\leftarrow \mathcal{S}_{\rm{new}}\cup\mathbf{u}$;
                }
            }
        }
    }
    $\mathcal{S}_{\rm{active}} \leftarrow \mathcal{S}_{\rm{new}}$; $L \leftarrow L + |\mathcal{S}_{\rm{new}}|$; $\mathcal{S}_{\rm{new}} \leftarrow \emptyset$;
}
\Return $L$.
\end{algorithm}
At the beginning, the subset $\mathcal{S}$ is activated. Then at each step, each vertex $\mathbf{v}$ activates its successors that remained inactive in the last step with a probability defined by $p(\mathbf{v},\mathbf{u})$. The activation can be conceptualized as a coin toss where the outcome is determined by the head probability $p(\mathbf{v},\mathbf{u})$. If the result is the head, the vertex $\mathbf{v}$ becomes activated; otherwise, it remains inactive. Starting from $\mathcal{S}$, the diffusion terminates when no further vertex can be activated in the graph. Lastly, we quantify the influence of the set with the number of activated vertices, where a larger number corresponds to greater influence. To help understand the procedure of Algorithm~\ref{alg:influence_function}, we provide an illustration as shown in Figure~\ref{fig:system}. For convenience, we express Algorithm~\ref{alg:influence_function} as an influence function $f_{\mathcal{G}}(\mathcal{S})$ for the graph $\mathcal{G}$ that takes example set $\mathcal{S}$ as inputs, and returns the number of activated vertices $L$.

\begin{algorithm}[!t]
\caption{Searching the subset with maximum influence.}
\label{alg:greedy-influence-maximization}
\SetAlgoNlRelativeSize{-1}
\SetKwInOut{Input}{Input}
\Input{The directed graph $\mathcal{G} = (\mathbf{V}, \mathbf{E},\mathbf{P})$, the annotation budget $m$.}
\KwResult{The set $\mathcal{S}_{\rm{u}}$ that includes $m$ examples to annotate.}
\textbf{Initialize} $\mathcal{S}_{0} \leftarrow \emptyset$, $t=0$; \\
\While{$t < m$}{
    $\mathbf{v}_{t} \leftarrow \argmax_{\mathbf{v} \in \mathbf{V} \setminus \mathcal{S}_{t}} f_{\mathcal{G}}(\mathcal{S}_{t} \cup \{\mathbf{v}\})$; \\
    $\mathcal{S}_{t+1} \leftarrow \mathcal{S}_{t} \cup \mathbf{v}_{t}$;\\
    $t\leftarrow t+1$; 
}
\textbf{Obtain} $\mathcal{S}_{\rm{u}}$ with $\mathcal{S}_m$ using the correspondence between embeddings and instances;  \\
\Return{$\mathcal{S}_{\rm{u}}$.} 
\end{algorithm}
\textbf{Searching the subset with maximum influence.} We exploit a simple yet effective greedy algorithm~\cite{kempe2003maximizing} to search the subset with maximum influence, which is illustrated in Algorithm~\ref{alg:greedy-influence-maximization}. Specifically, the algorithm is initialized with an empty set, and iteratively involves an instance if it can provide the maximum marginal gain to the influence function. The search algorithm terminates when the selected subset meets the annotation budget. Finally, we achieve the set $\mathcal{S}_{\rm{u}}$ that includes $m$ examples to annotate, using the correspondence between embeddings and instances.

\subsection{Prompt retrieval}
After the above influence-driven selective annotations, the subset $\mathcal{S}_{\rm{u}}$ is achieved. By making manual annotations on $\mathcal{S}_{\rm{u}}$, a set of annotated examples is obtained. We can then retrieve examples from the annotated set as in-context examples for each test input. Following previous studies~\cite{liu2021makes,su2022selective}, we will calculate embeddings for all annotated examples using Sentence-BERT~\cite{reimers2019sentence} and identify the most similar instances to each test input based on the cosine similarity. Notice that, the proposed method is agnostic to prompt retrieval methods. As demonstrated in \S\ref{section:retrieve}, our method can be combined with any other prompt retrieval technologies. Better prompt retrieval technologies can further boost final performance.

\section{Theoretical Analysis}
In this section, we perform theoretical analysis on the influence of the subset searched by our algorithm and provide the corresponding lower bound.
For any constructed graph $\mathcal{G}$, we exploit $\psi_{\mathbf{v}}(\mathcal{S})$ to denote the influence improvement of the subset $\mathcal{S}$ after adding $\mathbf{v}$ into $\mathcal{S}$, i.e., $\psi_{\mathbf{v}}(\mathcal{S}) = f_{\mathcal{G}}(\mathcal{S} \cup \mathbf{v}) - f_{\mathcal{G}}(\mathcal{S})$. For convenience, we use $\psi_t=f_{\mathcal{G}}(\mathcal{S}_t)-f_{\mathcal{G}}(\mathcal{S}_{t-1})~(t\geq 1)$ to denote the incremental value of the influence function $f_{\mathcal{G}}$
after adding $\mathbf{v}_t$ into $\mathcal{S}_{t-1}$. Also, we employ $\mathcal{S}_m^*$ to represent the subset with the optimal influence value in the graph $\mathcal{G}$ with annotation budget $m$. Afterward, the optimal solution we expect to search in Algorithm~\ref{alg:greedy-influence-maximization} can be regarded as 
\begin{equation}\label{eq:optimal}
    \mathcal{S}_m^*=\argmax_{\mathcal{S}\subset\mathbf{V}} f_{\mathcal{G}}(\mathcal{S}), \ \ \text{s.t.}\ \ |\mathcal{S}| = m.
\end{equation}
In the following, we present the submodular condition to facilitate theoretical analysis of our method.

\begin{condition}[submodular condition]
\label{condition:submodular}
In the problem of selective annotations, given any graph $\mathcal{G}$ constructed by our procedure, the influence function $f_\mathcal{G}$ is a submodular function which satisfies, for $\forall\mathbf{v}\in\mathbf{V}$, $\forall\mathcal{S}_a\subset\mathcal{S}_b\subset\mathbf{V}$,
\begin{equation}
\label{eq:submodular}
f_{\mathcal{G}}(\mathcal{S}_a \cup \mathbf{v}) - f_{\mathcal{G}}(\mathcal{S}_a)
\geq 
f_{\mathcal{G}}(\mathcal{S}_b \cup \mathbf{v}) - f_{\mathcal{G}}(\mathcal{S}_b).
\end{equation}
\end{condition}
\textbf{Remark~1.}
Intuitively speaking, given any graph $\mathcal{G}$, we say the influence function $f_{\mathcal{G}}$ satisfies the submodular condition if adding one data point to a smaller subset provides more influence than adding the same data point to a larger subset. 
In other words, it reflects the principle of diminishing returns: the marginal gain of including a data point in a set decreases as the size of the set increases. This condition can hold within the influence function~\cite{li2019submodularity}. 
Considering an extreme case, when subset $\mathcal{S} = \mathbf{V}$, the influence improvement of adding any data point to $\mathcal{S}$ will be zero.  
\begin{prop}
\label{prop1}
In Algorithm~\ref{alg:greedy-influence-maximization}, if the influence function $f_{\mathcal{G}}$ satisfies Condition~\ref{condition:submodular}, then for $f_{\mathcal{G}}(S^{*}_{m} )$,  
\begin{equation}
\label{eq:prob1}
\forall t \in [0,m-1), f_{\mathcal{G}}(\mathcal{S}^{*}_{m})   \leq f_{\mathcal{G}}(\mathcal{S}_{t}) + m\psi_{t+1}.
\end{equation}
\end{prop}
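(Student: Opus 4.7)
The plan is to follow the classical Nemhauser--Wolsey--Fisher style argument for greedy submodular maximization, specializing it to the influence function $f_{\mathcal{G}}$. The key idea is to compare $f_{\mathcal{G}}(\mathcal{S}^*_m)$ to $f_{\mathcal{G}}(\mathcal{S}_t \cup \mathcal{S}^*_m)$ and to decompose the latter minus $f_{\mathcal{G}}(\mathcal{S}_t)$ as a telescoping sum of marginal gains, each of which can be controlled by submodularity together with the greedy choice rule used in Algorithm~\ref{alg:greedy-influence-maximization}.

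More concretely, I would first enumerate the optimal seed set as $\mathcal{S}^*_m = \{\mathbf{v}^*_1, \ldots, \mathbf{v}^*_m\}$ in any order, and write
\begin{equation*}
f_{\mathcal{G}}(\mathcal{S}_t \cup \mathcal{S}^*_m) - f_{\mathcal{G}}(\mathcal{S}_t) = \sum_{i=1}^{m} \Bigl[ f_{\mathcal{G}}\bigl(\mathcal{S}_t \cup \{\mathbf{v}^*_1,\ldots,\mathbf{v}^*_i\}\bigr) - f_{\mathcal{G}}\bigl(\mathcal{S}_t \cup \{\mathbf{v}^*_1,\ldots,\mathbf{v}^*_{i-1}\}\bigr) \Bigr].
\end{equation*}
Applying Condition~\ref{condition:submodular} with $\mathcal{S}_a = \mathcal{S}_t$ and $\mathcal{S}_b = \mathcal{S}_t \cup \{\mathbf{v}^*_1,\ldots,\mathbf{v}^*_{i-1}\}$, each summand is upper bounded by $\psi_{\mathbf{v}^*_i}(\mathcal{S}_t) = f_{\mathcal{G}}(\mathcal{S}_t \cup \mathbf{v}^*_i) - f_{\mathcal{G}}(\mathcal{S}_t)$. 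Next, by the greedy selection rule in Algorithm~\ref{alg:greedy-influence-maximization}, $\mathbf{v}_{t+1}$ maximizes the marginal gain over $\mathbf{V}\setminus \mathcal{S}_t$, so $\psi_{t+1} = \psi_{\mathbf{v}_{t+1}}(\mathcal{S}_t) \geq \psi_{\mathbf{v}^*_i}(\mathcal{S}_t)$ for every $i$ (the inequality remains valid when $\mathbf{v}^*_i \in \mathcal{S}_t$, since then $\psi_{\mathbf{v}^*_i}(\mathcal{S}_t)=0$). Summing across $i$ gives $f_{\mathcal{G}}(\mathcal{S}_t \cup \mathcal{S}^*_m) - f_{\mathcal{G}}(\mathcal{S}_t) \leq m\,\psi_{t+1}$.

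To finish, I would invoke monotonicity of $f_{\mathcal{G}}$, namely $f_{\mathcal{G}}(\mathcal{S}^*_m) \leq f_{\mathcal{G}}(\mathcal{S}_t \cup \mathcal{S}^*_m)$, which is immediate from the independent-cascade diffusion model underlying Algorithm~\ref{alg:influence_function}: enlarging the initial active set can only enlarge (under the same coin-toss coupling) the final active set, hence $L$ is nondecreasing in $\mathcal{S}$. Combining this with the previous inequality yields $f_{\mathcal{G}}(\mathcal{S}^*_m) \leq f_{\mathcal{G}}(\mathcal{S}_t) + m\,\psi_{t+1}$, which is exactly the bound in \eqref{eq:prob1}.

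The main obstacle, as I see it, is not the telescoping algebra but making sure every ingredient is actually available: the statement of Condition~\ref{condition:submodular} supplies submodularity but not monotonicity, so I would need to either justify monotonicity explicitly from the diffusion dynamics (using the coupling argument sketched above) or point out that this is a standard property of the influence function and is implicitly used. A secondary subtlety is handling the case where some optimal seeds $\mathbf{v}^*_i$ already belong to $\mathcal{S}_t$; I would note that in that case the corresponding marginal contribution vanishes and the chain of inequalities still goes through verbatim, so the argument requires no separate case analysis.
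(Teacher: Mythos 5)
Your proposal is correct and follows essentially the same route as the paper: the paper's Lemma~1 is exactly your telescoping-plus-submodularity bound, its Lemma~2 supplies the monotonicity you flag as the ingredient missing from Condition~1, and the final step uses the greedy rule together with $|\mathcal{S}^*_m \setminus \mathcal{S}_t| \leq m$ just as you do. If anything, your write-up is more explicit that the bound $\psi_{\mathbf{v}}(\mathcal{S}_t) \leq \psi_{t+1}$ comes from the greedy selection rule, a point the paper's proof leaves implicit.
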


\textbf{Remark 2.} Proposition~\ref{prop1} proposes an upper bound for $f_{\mathcal{G}}(\mathcal{S}^{*}_{m})$ in the form of the influence $f_{\mathcal{G}}(\mathcal{S}_{t})$ and its improvement at next step $t+1$, when Algorithm~\ref{alg:greedy-influence-maximization} is applied to selective annotations.

\begin{thm}
\label{them1}
In Algorithm~\ref{alg:greedy-influence-maximization}, if influence function $f_{\mathcal{G}}$ satisfies Condition~\ref{condition:submodular}, when the algorithm terminates at the step $m-1$, $f_\mathcal{G}(\mathcal{S}_{m})$ has a lower bound: 
\begin{equation}
\label{eq:them1}
f_\mathcal{G}(\mathcal{S}_{m}) \geq (1-(1-1/m)^m) f_{\mathcal{G}}(\mathcal{S}^{*}_{m}).
\end{equation}
\end{thm}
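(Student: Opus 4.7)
The plan is to deploy the classical Nemhauser--Wolsey--Fisher-style $(1-1/e)$ argument, using Proposition~\ref{prop1} as the key per-step inequality. I would first rearrange Proposition~\ref{prop1} into a recurrence on the suboptimality gap $\Delta_t := f_{\mathcal{G}}(\mathcal{S}^*_m) - f_{\mathcal{G}}(\mathcal{S}_t)$. Since $\psi_{t+1} = f_{\mathcal{G}}(\mathcal{S}_{t+1}) - f_{\mathcal{G}}(\mathcal{S}_t)$, Proposition~\ref{prop1} rewrites as $\psi_{t+1} \geq \Delta_t / m$, and therefore
\begin{equation*}
\Delta_{t+1} = \Delta_t - \psi_{t+1} \leq \left(1 - \tfrac{1}{m}\right)\Delta_t.
\end{equation*}

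Next I would iterate this contraction across the $m$ greedy steps performed in Algorithm~\ref{alg:greedy-influence-maximization}. Applying the inequality for $t = 0, 1, \ldots, m-1$ and composing yields
\begin{equation*}
\Delta_m \leq \left(1 - \tfrac{1}{m}\right)^m \Delta_0.
\end{equation*}
Since the greedy procedure is initialized with $\mathcal{S}_0 = \emptyset$, and the diffusion process in Algorithm~\ref{alg:influence_function} activates no vertices from an empty seed (so $f_{\mathcal{G}}(\emptyset) = 0$), we have $\Delta_0 = f_{\mathcal{G}}(\mathcal{S}^*_m)$. Substituting back gives
\begin{equation*}
f_{\mathcal{G}}(\mathcal{S}^*_m) - f_{\mathcal{G}}(\mathcal{S}_m) \leq \left(1 - \tfrac{1}{m}\right)^m f_{\mathcal{G}}(\mathcal{S}^*_m),
\end{equation*}
which rearranges directly to the claimed lower bound $f_{\mathcal{G}}(\mathcal{S}_m) \geq \bigl(1 - (1 - 1/m)^m\bigr) f_{\mathcal{G}}(\mathcal{S}^*_m)$.

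The only subtle point is ensuring Proposition~\ref{prop1} is invoked at every index needed for the telescoping, namely $t = 0, 1, \ldots, m-1$, even though its statement uses the interval $[0, m-1)$; this should be read as covering every step at which a new vertex is greedily appended. Because all heavy lifting (the submodularity-based comparison between $f_{\mathcal{G}}(\mathcal{S}^*_m)$ and the current greedy value) is packaged inside Proposition~\ref{prop1}, there is no real obstacle beyond the clean algebraic unrolling described above; the main thing to verify carefully is that $f_{\mathcal{G}}(\emptyset) = 0$ (immediate from Algorithm~\ref{alg:influence_function}, which returns $L = 0$ when the seed is empty) so that the contraction can be anchored at $\Delta_0 = f_{\mathcal{G}}(\mathcal{S}^*_m)$.
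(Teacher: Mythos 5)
Your proof is correct and is essentially identical to the paper's: both rearrange Proposition~\ref{prop1} into the contraction $f_{\mathcal{G}}(\mathcal{S}^{*}_{m})-f_{\mathcal{G}}(\mathcal{S}_{t+1})\leq\frac{m-1}{m}\left(f_{\mathcal{G}}(\mathcal{S}^{*}_{m})-f_{\mathcal{G}}(\mathcal{S}_{t})\right)$, iterate it $m$ times, and anchor at $f_{\mathcal{G}}(\emptyset)=0$. Your remark about the index range $[0,m-1)$ is a fair observation about the proposition's statement, but the paper applies the inequality at every greedy step exactly as you do.
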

\textbf{Remark 3.} 
Theorem~\ref{them1} provides an approximation guarantee for the influence of the selected subset returned by our method. The influence of the selected subset is at least as large as a certain proportion of the influence of the optimal solution, i.e., $1-(1-1/m)^m$. With the annotation budget $m$ increases, this fraction gets closer to $1-1/e$.

For the proofs of Proposition~\ref{prop1} and Theorem~\ref{them1}, readers can refer to Appendix~\ref{app:proofs}.
\section{Experiments}
In this section, we evaluate our method (IDEAL) on multiple datasets that have different categories of tasks. Experimental setups are first introduced~(\S\ref{sec:setup}). We then demonstrate that the proposed method can find a better selective annotation subset in a more efficient way compared with baselines~(\S\ref{sec:main_result}). Moreover, we perform in-depth investigations to provide a better understanding of the superiority of the proposed method~(\S\ref{sec:more_analysis}).
Finally, a case study is also provided to further evaluate the selected subset from our method in an automatic annotation scenario~(\S\ref{sec:automatic_annotation}).
\subsection{Experimental setups}\label{sec:setup}
\textbf{Datasets and tasks.}
Following previous work~\cite{su2022selective}, we employ 9 datasets for the evaluations,
which can be categorized into 4 different tasks, including classification, multi-choice, dialogue, and generation. The details of the datasets are provided in Appendix~\ref{app:datasets}. 
For each dataset, the original ``train/dev/test'' split from the Transformer library~\cite{wolf2019huggingface} is utilized. We use test data for evaluation if they are available publicly (SST-5~\cite{socher2013recursive}, DBpedia~\cite{lehmann2015dbpedia}, MWoZ~\cite{budzianowski2018multiwoz}, and Xsum~\cite{narayan2018don}). Otherwise, we follow the same setting in~\cite{su2022selective} and use the development set. 
We use accuracy as metric for all classifications and multiple choices tasks,  joint accuracy~\cite{budzianowski2018multiwoz} for MWoZ, test suite accuracy~\cite{zhong2020semantic} for GeoQuery~\cite{zelle1996learning}, and ROUGE-L~\cite{lin2004rouge} for Xsum.

\textbf{Models.} 
If not otherwise specified, we run all experiments on the GPT-J 6B model~\cite{gpt-j} except the GeoQuery and MWoZ datasets where we use Text-devinci-002~\cite{chen2021evaluating}. 
We also provide experiments on other models including GPT-Neo 2.7B~\cite{black2021gpt} and more advanced models GPT-3.5-Turbo~\cite{gpt-3.5} in \S\ref{other_evaluation}. Our implementation is detailed in Appendix~\ref{app:imp_details}.

\textbf{Baselines.}
In the main experiments, we perform a comprehensive evaluation of our method that is compared with previous state-of-the-art selective annotation baselines, i.e., Vote-$k$~\cite{su2022selective} and random selection~(abbreviated as ``Random'' below). Note that, in \S\ref{other_method}, we also compare our method with alternative methods that can select a coreset from
large-scale unlabeled data on typical datasets. For the baseline Vote-$k$, we conduct experiments by running its official code\footnote{\href{https://github.com/HKUNLP/icl-selective-annotation}{https://github.com/HKUNLP/icl-selective-annotation.}}.

\subsection{Main results}\label{sec:main_result}
\begin{table}[htb!]
\addtolength{\tabcolsep}{-4.7pt}
\centering
\small
\begin{tabular}{cc   m{0.001em}  ccccc   m{-0.05mm}   c m{-0.05em}  c m{0.001em}   ccc }
\toprule

\multicolumn{2}{c}{\multirow{2}{*}{\textbf{Method}}}
&& \multicolumn{5}{c}{\textbf{Classification}}
&& \multicolumn{1}{c}{\textbf{Multi-Choice}}
&& \multicolumn{1}{c}{\textbf{Dialogue}}
&& \multicolumn{3}{c}{\textbf{Generation}}
\\
\cmidrule(lr){4-8}
\cmidrule(lr){9-10}
\cmidrule(lr){12-13}
\cmidrule(lr){14-16}

&
&& MRPC & SST-5 & MNLI & DBpedia & RTE
&& HellaSwag
&& MWoZ 
&& GeoQ & Xsum
\\

\midrule[.1em]

100
&Random
&& 64.3
&  49.6
& 38.2
&  89.8
&  55.3
&& 66.7
&& 39.9
&& 55.3
& 15.3
\\

100
&Vote-$k$\xspace
&& 64.6
& 46.6
& 38.9
& 89.2
&  57.6
&& 67.9
&& 48.3
&& \textbf{58.8}
& 17.2
\\

\rowcolor{gray!30} 
100
& IDEAL
&& \textbf{66.4}
& \textbf{51.4}
& \textbf{41.0}
&  \textbf{90.6}
&  \textbf{58.9}
&& \textbf{68.6}
&& \textbf{52.2}
&& 58.2
& \textbf{19.9}
\\

\midrule[.05em]

18
&Random
&& 57.4
& 42.9
& 37.8
& 85.2
& 57.9
&& 66.0
&& 37.0
&& 47.5
& 13.6
\\
18
&Vote-$k$\xspace
&& 61.1
& 41.7
& 39.1
& 89.9
& 58.2
&& 66.5
&& 37.7
&& 50.9
& 15.2
\\

\rowcolor{gray!30} 
18
& IDEAL
&& \textbf{63.0}
& \textbf{43.2}
& \textbf{40.0}
&  \textbf{90.1}
&  \textbf{59.4}
&& \textbf{67.1} 
&& \textbf{38.5}
&& \textbf{52.0}
& \textbf{19.6}
\\

\bottomrule
\end{tabular}
\caption{
The performance of our method and baselines on 9 different datasets with an annotation budget of 100 and 18. 
We use similar-based prompt retrieval for all methods and report the average results with 3 different runs for each method. We can observe that our method works better than Random and Vote-$k$ in almost all cases (17/18) under two annotation budgets. The best result in each case is \textbf{bolded}. We also provide the maximum and minimum values of the results in Appendix~\ref{app:max_min}.
}
\label{tab:main_results}
\end{table}
\begin{wrapfigure}{r}{0.60\textwidth}
\centering
\vspace{-12pt}
\includegraphics[width = 0.88\linewidth]{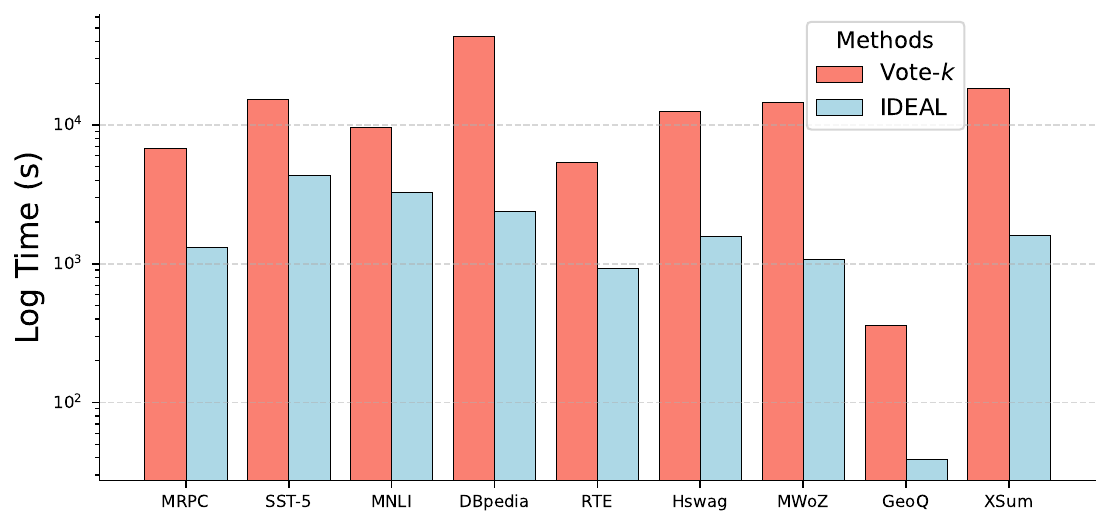}
\caption{Comparison of our method and Vote-$k$ with respect to time consumption during subset selection under the same hardware condition. Here the annotation budget is 18. The y-axis represents the time consumption with a \emph{log scale}. We can observe that our method largely reduces the time cost compared with Vote-$k$. }
\label{fig:cost}
\end{wrapfigure}
\textbf{Measurement on performance.} We first perform the evaluations for Random,  Vote-$k$, and our method. The annotation budget is set to 18 and 100 respectively. 
Note that we include $18$ as the annotation budget considering all annotated examples can be fit to the prompt of the large language models within context limits. Therefore, the prompt retrieve stage can be ignored and the evaluation results can naturally represent the quality of the selected examples. We provide experimental results in Table~\ref{tab:main_results}. As can be seen, our method achieves better performance than baselines in most of the evaluation scenarios (17 out of 18). Interestingly, we find that random selection outperforms Vote-$k$ in 3 out of 18 cases. We conjecture that, under some ideal circumstances, the selected subset by random selection can approximate the distribution of full data. If test data follows the same distribution, good performance can be achieved. Note that we also illustrate selected examples and label distributions in selective annotations in Appendix~\ref{app:select_examples} and Appendix~\ref{appendix:label_statistic} respectively.

\textbf{Measurement on time cost.} 
Previous work Vote-$k$~\cite{su2022selective} encompasses generating prediction for most unlabeled data with a set of selected examples as prompts and performs data selection according to the confidence scores of the prediction. However, this process results in large unnecessary costs at inference time. 
Meanwhile, LLMs are often used as a service and an extra charge will appear with the usage of the token in both the input and output.  In Figure~\ref{fig:cost}, we compare the time cost of subset selection in our method against Vote-$k$ on all tasks with the same hardware. The annotation budget is set to 18. We can observe that our method saves a tremendous amount of cost compared to Vote-$k$. Specifically, under the same hardware conditions, IDEAL achieves a 7.8$\times$ lead on average over Vote-$k$. The speed improvement benefits from the fact that the proposed method does not need to perform example selection by generating predictions on a large number of unlabeled examples and is completely unsupervised.

\subsection{More analysis}\label{sec:more_analysis}

\subsubsection{Larger influence brings better performance}
\begin{wrapfigure}{r}{0.61\textwidth}
    \centering
    \vspace{-15pt}
    \begin{subfigure}[b]{0.29\textwidth}
        \includegraphics[width=\textwidth]{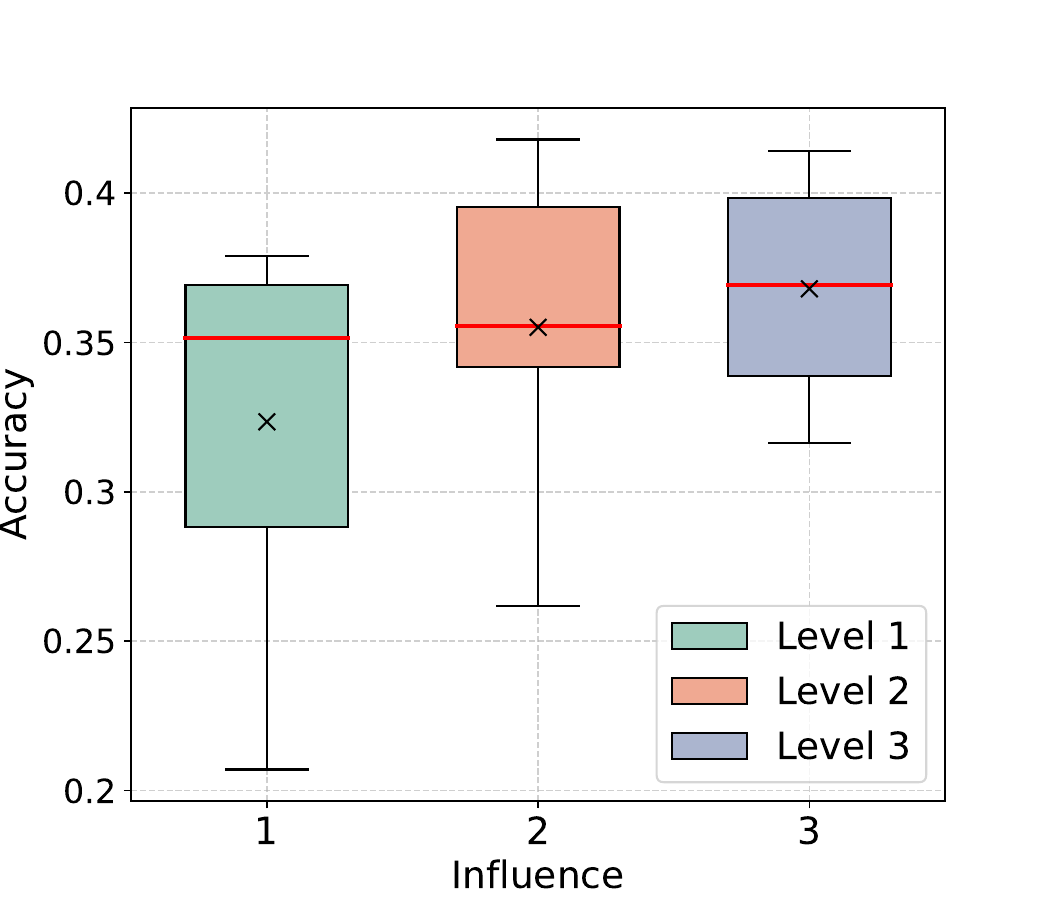}
        \caption{SST-5}
        \label{fig:res:math}
    \end{subfigure}
    \begin{subfigure}[b]{0.29\textwidth}
    \includegraphics[width=\textwidth]{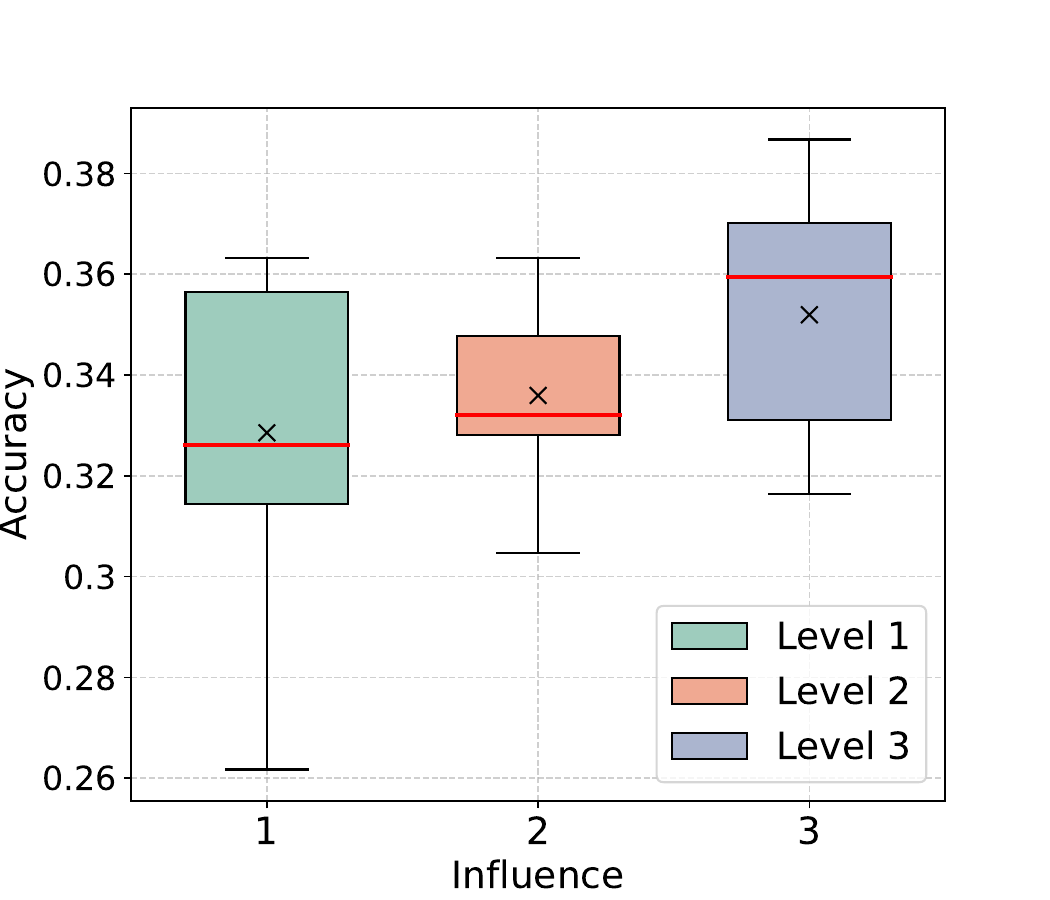}
    \caption{MNLI}
    \label{fig:res:retri}
    \end{subfigure}
    \caption{Influence vs. Performance. The illustration of the positive correlation between the influence achieved by Algorithm~\ref{alg:influence_function} and final performance.}
    \label{fig:results_influence}
\end{wrapfigure}
We conduct experiments to investigate the correlation between subset influence and its corresponding in-context learning performance. Specifically, we randomly select a collection of example subsets from a large unlabeled data pool. We then evaluate each subset as a prompt and record its performance and influence in the constructed graph, resulting in a set of influence-performance pairs. Our goal is to analyze the correlation between these two metrics. To achieve this, we perform experiments on SST-5 and MNLI. We sample 30 subsets and order them according to their influences, where each subset includes 5 examples. We divide this sorted subset sequence equally into three influence levels, with each level containing 10 subsets. We visualize the performance of subsets in each influence level in Figure~\ref{fig:results_influence}.
Our analysis reveals that subsets with larger influence levels achieve better average, median, and worst-case performance. This finding further demonstrates that quantifying the influence of each potential subset is an effective metric in the selective annotation problem.

\subsubsection{Comparisons with alternative methods}
\label{other_method}
\begin{wraptable}{r}{8cm}
\centering
\setlength{\tabcolsep}{1.5pt} 
\vspace{-12pt}
\begin{tabular}{cccccc}
\toprule
Method    & $K$-Means & MFL  & Fast Vote-$k$ & Vote-$k$ & \cellcolor{gray!30} IDEAL         \\ \hline
MRPC      & 57.4    & 58.2 & 59.3          & 61.1     & \cellcolor{gray!30}\textbf{63.0} \\
MNLI      & 35.8    & 38.8 & 39.5          & 39.1     & \cellcolor{gray!30}\textbf{40.0} \\
HellaSwag & 65.4    & 65.2 & 65.9          & 66.5     & \cellcolor{gray!30}\textbf{67.1} \\ \bottomrule
\end{tabular}
\caption{
\label{all_sample_selection_method} 
Comparisons of alternative methods that can select a coreset from large-scale unlabeled data. The annotation budget is 18. Experimental results are reported by averaging over three random trials. The performance of the baseline Vote-$k$ is also included here. The best performance in each case is \textbf{bolded}.
}
\end{wraptable}
We also compare our method with other alternative methods that can select the coreset from large-scale unlabeled data. We perform the evaluations on MRPC, MNLI and HellaSwag. 
We include the following alternative methods
(1) $K$-Means~\cite{lloyd1982least}, which groups all examples into $m$ clusters, and selects the centroid example from each cluster. 
(2) Maximizing facility location (MFL)~\cite{lin2009select}, which aims at optimizing the representativeness of the selected subset. 
(3) Fast Vote-$k$~\cite{su2022selective}, which is an efficient alternative to Vote-$k$ which directly picks $m$ examples with the largest Vote-$k$ scores. 

We show the results in Table~\ref{all_sample_selection_method}. We can observe IDEAL consistently outperforms the baselines in all datasets, demonstrating its superiority. Note that, the graph-based methods (Vote-$k$, Fast Vote-$k$, and our IDEAL) outperform the methods non-graph-based methods~($K$-Means and MFL) in all cases. This phenomenon suggests that graph-based methods are suitable for capturing similarity relationships between examples in the selective annotation problem, which can lead to better results.

\begin{table}[t]
\begin{minipage}{0.49\linewidth}
\centering
\setlength{\tabcolsep}{6pt} 
\begin{tabular}{ccccc}
\toprule
\multicolumn{2}{c}{Method} & \multicolumn{3}{c}{Datasets} \\ \hline
Selection        & Retrieval        & MRPC      & MNLI      & HellaSwag     \\ \hline
Vote-$k$         & Similar          & 64.6      & 38.9      & 67.9          \\
\rowcolor{gray!30} IDEAL            & Similar          & \textbf{66.4}      & \textbf{41.0}      & \textbf{68.6}          \\ \hline
Vote-$k$         & Random           & 60.7      & 37.8      & 64.6          \\
\rowcolor{gray!30} IDEAL            & Random           & \textbf{62.5}      & \textbf{39.0}      & \textbf{66.8}          \\ \bottomrule
\end{tabular}
\caption{Comparison of random and similar prompt retrieval with Vote-$k$ and IDEAL on MRPC, MNLI, and HellaSwag. The subset selection method with a similar prompt retrieve achieves better performance compared with its version with a random prompt retrieve method. The best performance in each case is \textbf{bolded}.}
\label{tab:random_retrieval}
\end{minipage}
\hfill
\begin{minipage}{0.49\linewidth}
\centering
\setlength{\tabcolsep}{12pt} 
\begin{tabular}{cccc}
\toprule
\multirow{2}{*}{Method} & \multirow{2}{*}{Models} & \multicolumn{2}{c}{Test Domain} \\ \cline{3-4} 
                        &                         & IMDb        & BoolQ Cst.        \\ \hline
Vote-$k$                & GPT-Neo                 & 71.1        & 56.4              \\
\rowcolor{gray!30} IDEAL                   & GPT-Neo                 & \textbf{72.2}        & \textbf{58.0}              \\ \hline
Vote-$k$                & GPT-J                   & 76.4        & 56.1              \\
\rowcolor{gray!30} IDEAL                   & GPT-J                   & \textbf{76.8}        & \textbf{56.4}             
\\
\bottomrule
\end{tabular}
\caption{The evaluations on out-of-distribution tasks.  We show the performance of different methods on IMDb and BoolQ Contrast Set (target domains). In the evaluations, the prompts consist of selected SST-2 and BoolQ training examples, respectively (source domains). The best performance in each case is \textbf{bolded}.}
\label{tab:ood}
\end{minipage}
\end{table}
\subsubsection{Evaluation with different retrieval methods}\label{section:retrieve}
In previous experiments, we used a similarity-based prompt retrieval method by default. In this section, we conduct experiments to quantify the effect of different prompt retrieval methods under the annotation 100. We present the results in Table~\ref{tab:random_retrieval}. We observe that both Vote-$k$ and IDEAL suffer from a significant performance drop when the prompt retrieval method is changed from similarity-based to random selection. Notably, IDEAL also achieves better performance than Vote-$k$ when combined with random retrieval in all datasets. It suggests that IDEAL can cultivate a more stable training subset~\cite{chang2023data} for in-context learning tasks. Note that we also show that our IDEAL is more stable and robust against the order of in-context examples in Appendix~\ref{prompts_order}.

\subsubsection{Evaluation on other language models}
\label{other_evaluation}

Here we evaluate IDEAL on other language models, including GPT-Neo 2.7B~\cite{black2021gpt}, and the advanced chat model GPT-3.5-Turbo. While GPT-3.5-Turbo has mainly been optimized for chat, it also performs well on traditional completion tasks~\cite{kheiri2023sentimentgpt}. 
To conduct experiments, we select three classification tasks (MRPC, MNLI, and RTE), considering they are easier for prompting GPT-3.5-Turbo to return responses without pleasantries or explanatory content. 

The evaluation results are presented in Figure~\ref{fig:other_model_all}. Our evaluations reveal that IDEAL consistently outperforms the baselines across all models tested. This demonstrates the versatility of our method in the context of learning tasks using models of varying sizes. 
Notably, we observe that the largest model, i.e., GPT-3.5-Turbo, performs worse than GPT-Neo and GPT-J. This situation arises because GPT-3.5-Turbo is primarily optimized for chat tasks and faces challenges in following human instructions for classification. This scenario also has been identified in \cite{ye2023comprehensive}. 

\subsubsection{Evaluation on out-of-distribution tasks}\label{appendix:umap}
We further evaluate our method on out-of-distribution tasks~\cite{wang2022generalizing, zhou2022domain,zhang2023hypertime,li2022out,huang2023harnessing}, where there is a distribution shift between the selective annotation data and test data. Following~\cite{chang2023data}, we compare IDEAL and Vote-$k$ using SST-2~\cite{socher2013recursive}, BoolQ~\cite{clark2019boolq} as source tasks, and IMDb~\cite{maas2011learning}, BoolQ Contrast Set~\cite{gardner2020evaluating} as target tasks, respectively. In all evaluations, we set the annotation budget as 18 and use the similarity-based retrieve to perform 
the evaluations on the test set in target domains. We use GPT-J 6B and GPT-Neo 2.7B here and show the results in Table~\ref{tab:ood}.
We can observe that IDEAL still outperforms baselines on all datasets with two models, implying that IDEAL could select the subset which could depict the invariant properties of this kind of tasks and generalize to out-of-distribution scenarios.

\begin{figure}[!t]
    \centering
    \begin{subfigure}[b]{0.327\textwidth}
        \includegraphics[width=\textwidth]{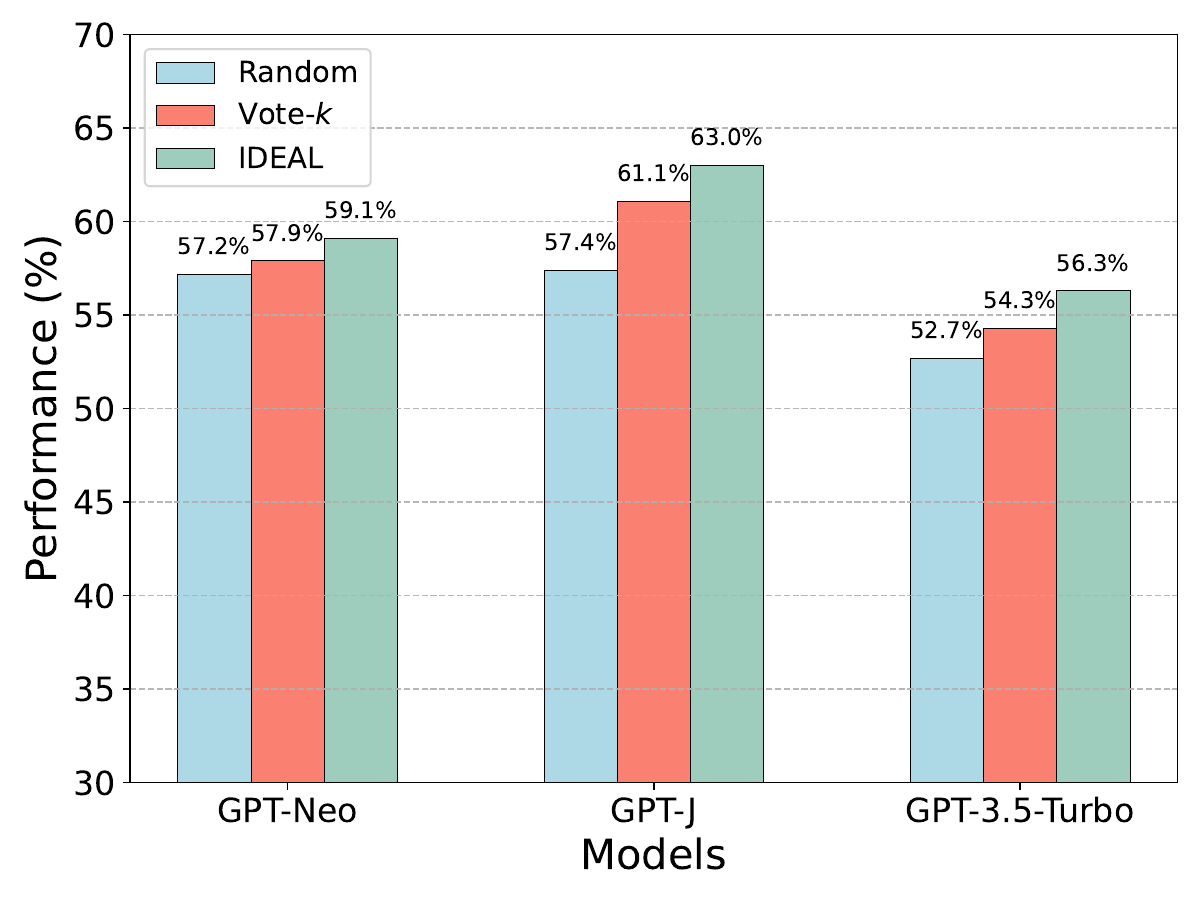}
        \caption{MRPC}
        \label{fig:other_model_mrpc}
    \end{subfigure}
    \begin{subfigure}[b]{0.327\textwidth}
    \includegraphics[width=\textwidth]{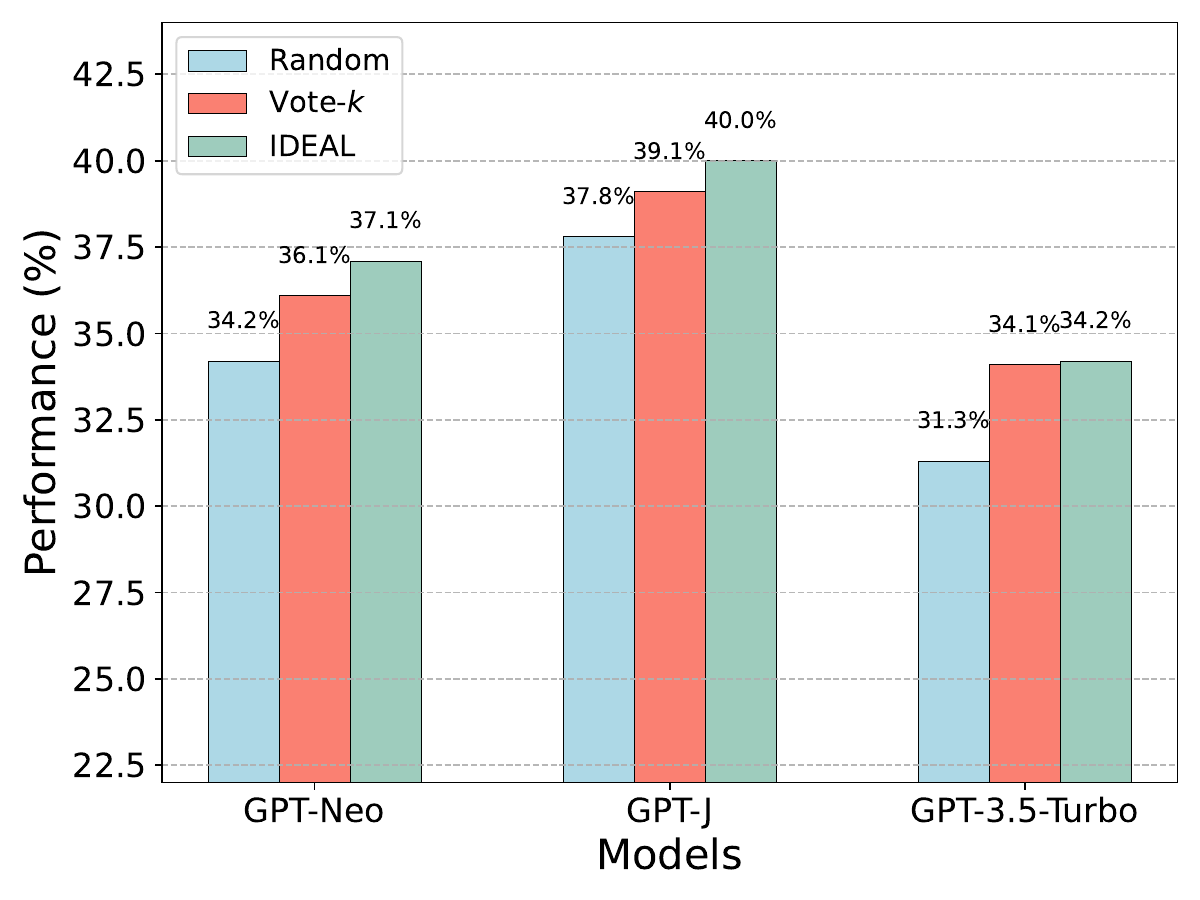}
    \caption{MNLI}
    \label{fig:other_model_mnli}
    \end{subfigure}
    \begin{subfigure}[b]{0.327\textwidth}
    \includegraphics[width=\textwidth]{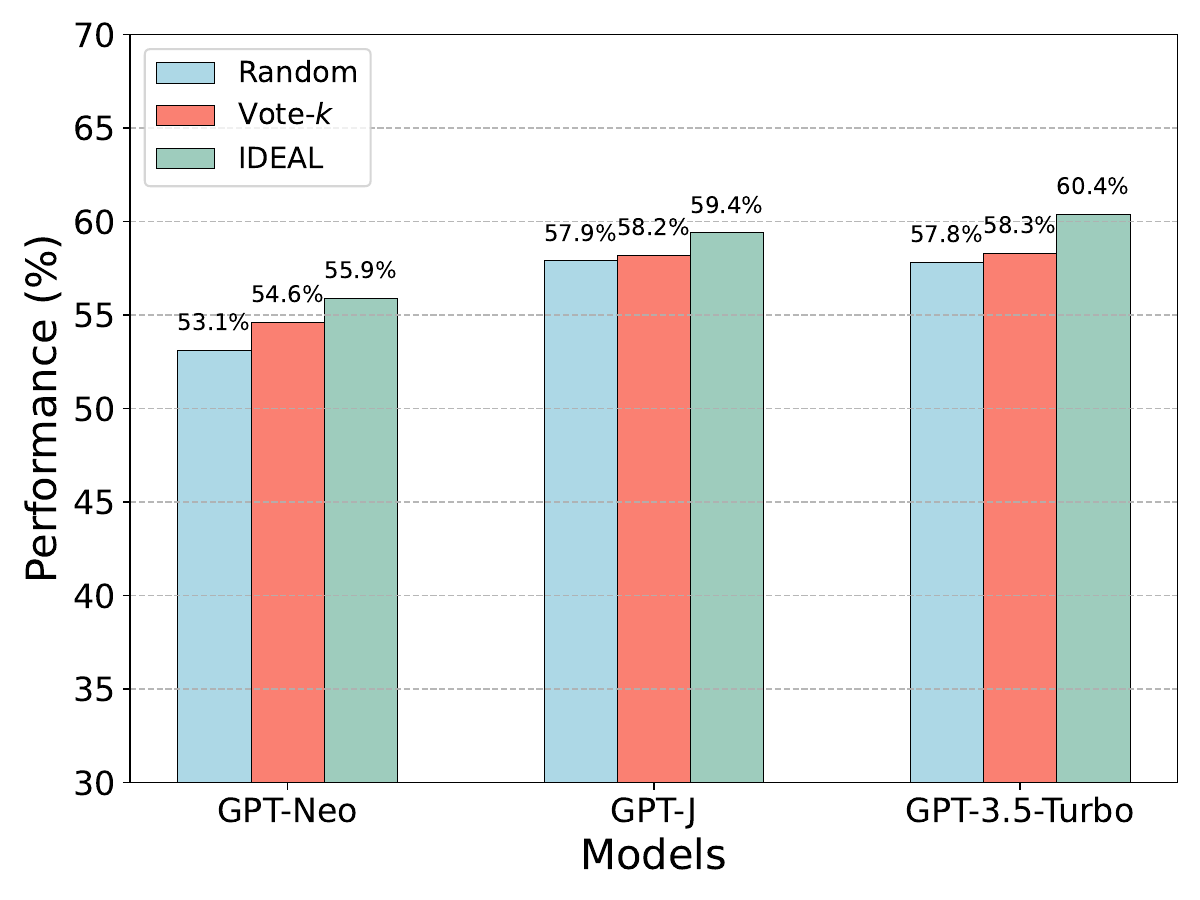}
    \caption{RTE}
    \label{fig:other_model_rte}
    \end{subfigure}
    \caption{Comparisons with various models when the annotation budget is 18. IDEAL consistently achieves the best performance compared with baselines in models with different datasets.}
    \label{fig:other_model_all}
\end{figure}
\subsection{Case study: automatic annotation}
\label{sec:automatic_annotation}
In previous experiments, we used a small set of manually annotated examples as candidate prompts
to make predictions. Instead of this procedure, here we are interested in a case study that utilizes the subset selected by IDEAL to annotate all available unlabeled data automatically, leading to a larger set of candidate prompts. Specifically, we first choose an initial subset from the pool of unlabeled 
data using IDEAL and manually label this selected subset. Afterward, we simulate the information diffusion process from the initial subset to all other data, where we employ those activated data as prompts to predict upcoming activated data at each step and label them accordingly with prediction results. This process ultimately makes a fully labeled training dataset. 
\begin{wraptable}{r}{7.2cm}
\centering
\setlength{\tabcolsep}{1.5pt} 
\begin{tabular}{ccccccc}
\toprule
Method & MRPC  &  SST-5 & MNLI & DBpedia & RTE   \\
\midrule
Vote-$k$   
& 63.8
&  48.6
& 39.5
& 90.2
& 55.7
\\
IDEAL
&  65.2
& 49.4
& \textbf{40.3}
& 90.8
& 57.4
\\
\rowcolor{gray!30} 
Auto-IDEAL 
&  \textbf{65.8}
&  \textbf{50.4}
& 39.8
& \textbf{91.8}
&  \textbf{58.3}
\\
\bottomrule
\end{tabular}
\caption{Comparison between Vote-$k$, IDEAL, and Auto-IDEAL. Auto-IDEAL is an expanded version of IDEAL for automatic annotation. We evaluate these algorithms on all classification tasks and average their performance over three random trials. The best performance in each case is \textbf{bolded}. The results indicate that Auto-IDEAL can enhance the performance of IDEAL and achieve the best performance in 4 out of 5 cases.}
\label{case_study} 
\end{wraptable}
Finally, all examples (including manual labeling and automatic labeling ) are utilized as potential prompts in conjunction with the prompt retrieve technique for final testing.
We name this paradigm as Auto-IDEAL and compare it with Vote-$k$ and origin IDEAL on all classification datasets. We choose 300 training data for each dataset to perform experiments. The manual annotation budget is set to 150, i.e., half of the labels of the candidate prompts in Auto-IDEAL are annotated automatically. 

Experimental results are provided in Table~\ref{case_study}. As can be observed, Auto-IDEAL even achieves better performance than IDEAL in 4 of 5 cases. Notably, although the performance is worse on MNLI, it is still competitive (better than Vote-$k$). It suggests that expanding the candidate prompts through automatic annotation following the diffusion process can further boost the performance of IDEAL. It benefits from the fact that information only diffuses between similar examples. Therefore, unlabeled examples will be automatically annotated using the most similar annotated examples as prompts leading to a promising annotation success rate.

\section{Conclusion}
A series of recent works have confirmed the powerful ability of in-context learning for large language models. We investigate the ability from the perspective of selective annotations and propose an influence-driven method that selects a subset
of data that acts as a proxy and closely approximates full data. Theoretical analysis is provided to establish an upper limit for the global optimal solution, and demonstrate that our greedy search algorithm selects a subset with influence at least as substantial as a specific proportion of the optimal solution's influence. Empirical evaluations illustrate 
the superiority of our method across a range of benchmarks, delivering superior performance while largely reducing the time required for subset selection. We hope this work can help researchers and practitioners understand the promise and potential of selective annotations in in-context learning, and facilitate them in the efficient conceptualization of novel language-based challenges.

\bibliography{iclr2024_conference}
\bibliographystyle{unsrt}

\appendix

\section{Related Work}
\subsection{In-context learning}
In-context learning~(ICL) has become a new paradigm for natural language processing (NLP), where large language models make predictions only based on contexts augmented with a few examples~\cite{dong2022survey,xie2022explanation,shin2022effect,zhang2023trained,bai2023transformers}. A series of works attempts to revise, enhance, and understand ICL, which include but are not limited to prompt tuning~\cite{kim2022self,wang2022iteratively,mishra2022cross}, analyzing intrinsic mechanism~\cite{bansal2022rethinking,chan2022data,li2023transformers,garg2022can}, evaluations~\cite{srivastava2022beyond,wang2022super}, applications in multiple domains~\cite{chen2022meta,lee2022does,cho2023prompt}, and etc. Different from them, this paper studies selective annotations for ICL, which can effectively reduce the annotation cost in ICL. Furthermore, compared with recent work~\cite{su2022selective}, as discussed in the main paper, this work is superior in many aspects, such as the end-to-end manner, mitigation of the trade-off between diversity and representativeness, theoretical guarantees, and better empirical performance. 

\subsection{Coreset selection}
Coreset selection focuses on selecting a small but highly
informative subset from a large dataset for follow-up tasks, which can significantly reduce the data storage cost and training consumption~\cite{huang2018epsilon,huang2023near,feldman2020neural,sorscher2022beyond}. Most of the works on coreset selection target the scenes of supervised learning and classification~\cite{sener2018active,toneva2019empirical,he2023large}. Only a few works extend coreset selection into unsupervised cases~\cite{sorscher2022beyond,su2022selective}. This paper studies unsupervised data selection for annotations in ICL, which reduces the annotation expenses of prompts and helps large language models become better few-shot learners. Also, it enjoys theoretical support. Therefore, this work is different from previous efforts and contributes to the research community.

\subsection{Data distillation}
Data distillation~\cite{wang2018dataset,zhao2021dataset,shin2023loss,cui2023scaling,du2023minimizing,loo2023dataset} is an alternative approach for dataset compression and curation, which is inspired by knowledge distillation. Different from coreset selection, this series of works target \textit{synthesizing} a small but informative dataset as an alternative to the original dataset. However, data distillation is criticized for only synthesizing a small number of data points due to computational source limitations~\cite{xia2023moderate,yang2023dataset}. The performances of data distillation and data selection are therefore not compared directly. Besides, it is under-explored about how to perform data distillation in an unsupervised manner on natural language processing tasks. Based on this analysis, the data distillation strategy is not involved in empirical evaluations. 

\section{Proofs}\label{app:proofs}
\subsection{Preliminary theoretical results}
We first present some preliminary theoretical results and their corresponding proofs for the subsequential proofs of Proposition~\ref{prop1} and Theorem~\ref{them1}. 
\subsubsection{Lemma~1}
\begin{lem}\label{lem:1}
    Given a graph $\mathcal{G}= (\mathbf{V},\mathbf{E}, \mathbf{P})$, if the influence function meets Condition~\ref{condition:submodular}, then for $\forall \mathcal{S}_i,\mathcal{S}_j \subseteq \mathbf{V}$: 
    \begin{equation}
        f_{\mathcal{G}}(\mathcal{S}_i)-f_{\mathcal{G}}(\mathcal{S}_j)\leq\sum_{\mathbf{v}\in\mathcal{S}_i-\mathcal{S}_j}\psi_{\mathbf{v}}(\mathcal{S}_j)-\sum_{\mathbf{v}\in\mathcal{S}_j-\mathcal{S}_i}\psi_{\mathbf{v}}(\mathcal{S}_i\cup\mathcal{S}_j-\mathbf{v}),
    \end{equation}
where $\psi_{\mathbf{v}}(\mathcal{S}_j):=f_{\mathcal{G}}(\mathcal{S}_j\cup\mathbf{v})-f_{\mathcal{G}}(\mathcal{S}_j)$. 
\end{lem}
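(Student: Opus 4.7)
The plan is to reduce Lemma~1 to two telescoping expansions separated by the set $\mathcal{S}_i \cup \mathcal{S}_j$, then apply the submodular Condition~\ref{condition:submodular} to each increment with the appropriate direction.

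First I would write the trivial identity
\begin{equation*}
f_{\mathcal{G}}(\mathcal{S}_i) - f_{\mathcal{G}}(\mathcal{S}_j) = \bigl[f_{\mathcal{G}}(\mathcal{S}_i\cup\mathcal{S}_j) - f_{\mathcal{G}}(\mathcal{S}_j)\bigr] - \bigl[f_{\mathcal{G}}(\mathcal{S}_i\cup\mathcal{S}_j) - f_{\mathcal{G}}(\mathcal{S}_i)\bigr],
\end{equation*}
which isolates the contributions of $\mathcal{S}_i - \mathcal{S}_j$ and $\mathcal{S}_j - \mathcal{S}_i$ on the right.

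Next I would telescope each bracketed term by introducing an arbitrary ordering of the elements being added. For the first bracket, enumerate $\mathcal{S}_i - \mathcal{S}_j = \{\mathbf{u}_1,\ldots,\mathbf{u}_k\}$ and write $f_{\mathcal{G}}(\mathcal{S}_i\cup\mathcal{S}_j)-f_{\mathcal{G}}(\mathcal{S}_j)$ as the telescoping sum of $\psi_{\mathbf{u}_l}(\mathcal{S}_j\cup\{\mathbf{u}_1,\ldots,\mathbf{u}_{l-1}\})$. Since $\mathcal{S}_j \subseteq \mathcal{S}_j\cup\{\mathbf{u}_1,\ldots,\mathbf{u}_{l-1}\}$, Condition~\ref{condition:submodular} gives $\psi_{\mathbf{u}_l}(\mathcal{S}_j\cup\{\mathbf{u}_1,\ldots,\mathbf{u}_{l-1}\})\leq \psi_{\mathbf{u}_l}(\mathcal{S}_j)$, yielding the upper bound $\sum_{\mathbf{v}\in\mathcal{S}_i-\mathcal{S}_j}\psi_{\mathbf{v}}(\mathcal{S}_j)$. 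For the second bracket, enumerate $\mathcal{S}_j - \mathcal{S}_i = \{\mathbf{w}_1,\ldots,\mathbf{w}_m\}$ and telescope $f_{\mathcal{G}}(\mathcal{S}_i\cup\mathcal{S}_j)-f_{\mathcal{G}}(\mathcal{S}_i)$ as $\sum_{l=1}^m \psi_{\mathbf{w}_l}(\mathcal{S}_i\cup\{\mathbf{w}_1,\ldots,\mathbf{w}_{l-1}\})$. Here the key observation is that $\mathcal{S}_i\cup\{\mathbf{w}_1,\ldots,\mathbf{w}_{l-1}\}\subseteq \mathcal{S}_i\cup\mathcal{S}_j-\mathbf{w}_l$, so submodularity now points the other way and gives the lower bound $\sum_{\mathbf{v}\in\mathcal{S}_j-\mathcal{S}_i}\psi_{\mathbf{v}}(\mathcal{S}_i\cup\mathcal{S}_j-\mathbf{v})$.

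Finally I would substitute both bounds into the identity: the first is added (upper bound on a positive term), the second is subtracted (lower bound on a term being subtracted), and the signs align to produce exactly the claimed inequality. The main obstacle, such as it is, lies in this sign bookkeeping: submodularity has to be invoked in opposite directions for the two halves because one bracket contributes positively and the other negatively, and the reference sets on the right-hand side of the lemma ($\mathcal{S}_j$ versus $\mathcal{S}_i\cup\mathcal{S}_j-\mathbf{v}$) are deliberately asymmetric to make the telescoping bounds tight. Once the right enumeration and inclusion chains are set up, the rest is purely mechanical.
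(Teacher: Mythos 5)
Your proposal is correct and follows essentially the same route as the paper's own proof: both decompose $f_{\mathcal{G}}(\mathcal{S}_i)-f_{\mathcal{G}}(\mathcal{S}_j)$ via the union $\mathcal{S}_i\cup\mathcal{S}_j$, telescope each half over an enumeration of $\mathcal{S}_i-\mathcal{S}_j$ and $\mathcal{S}_j-\mathcal{S}_i$, and invoke submodularity in opposite directions using exactly the inclusion chains you identify ($\mathcal{S}_j\subseteq\mathcal{S}_j\cup\{\mathbf{u}_1,\ldots,\mathbf{u}_{l-1}\}$ for the upper bound and $\mathcal{S}_i\cup\{\mathbf{w}_1,\ldots,\mathbf{w}_{l-1}\}\subseteq\mathcal{S}_i\cup\mathcal{S}_j-\mathbf{w}_l$ for the lower bound). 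Your write-up is, if anything, slightly cleaner in its sign bookkeeping than the paper's.
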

\begin{proof}
    The proof is inspired by~\cite{rolnick2014greedy}. We first let 
\begin{equation}
\label{eq:i2j}
\mathcal{S}_{i}-\mathcal{S}_{j} = \{\mathbf{a}_{1},...,\mathbf{a}_{r}\} 
\end{equation}
and 
\begin{equation}
\label{eq:j2i}
\mathcal{S}_{j}-\mathcal{S}_{i} = \{\mathbf{b}_{1},...,\mathbf{b}_{q}\},
\end{equation}
where $r\in\mathbbm{N}_+$ and $q\in\mathbbm{N}_+$. According to Eq.~(\ref{eq:i2j}), for the subsets $\mathcal{S}_i$ and $\mathcal{S}_j$, we have 
\begin{equation}
\label{eq:tem}
\mathcal{S}_{j} \cup \mathcal{S}_{i} = \mathcal{S}_{j} \cup \{\mathbf{a}_{1},...,\mathbf{a}_{r}\}.
\end{equation}
Afterward, we obtain
\begin{equation}\label{eq:lemma-step1}
    f_{\mathcal{G}}(\mathcal{S}_j\cup\mathcal{S}_i)-f_{\mathcal{G}}(\mathcal{S}_j)=f_{\mathcal{G}}(\mathcal{S}_{j} \cup \{\mathbf{a}_{1},...,\mathbf{a}_{r}\})-f_{\mathcal{G}}(\mathcal{S}_j).
\end{equation}
At a high level, Eq.~(\ref{eq:lemma-step1}) is to calculate the influence improvement of $\mathcal{S}_j$ after adding data points $\{\mathbf{a}_{1},...,\mathbf{a}_{r}\}$ into $\mathcal{S}_j$. As the influence improvement of adding one sequence of data points is equal to the sum of the influence improvement at each step, we have,
\begin{align}\label{eq:lemma-step3}
    & \quad f_{\mathcal{G}}(\mathcal{S}_j\cup\mathcal{S}_i)-f_{\mathcal{G}}(\mathcal{S}_j)\\\nonumber
    &=f_{\mathcal{G}}(\mathcal{S}_j\cup\mathbf{a}_1)-f_{\mathcal{G}}(\mathcal{S}_j)+\sum_{k=2}^r\left[f_{\mathcal{G}}(\mathcal{S}_j\cup\{\mathbf{a}_{1},...,\mathbf{a}_{k}\}) - f_{\mathcal{G}}(\mathcal{S}_{j} \cup \{\mathbf{a}_{1},...,\mathbf{a}_{k-1}\})\right]\\\nonumber
    &=\psi_{\mathbf{a}_1}(\mathcal{S}_j)+\sum_{k=2}^r\psi_{\mathbf{a}_k}(\mathcal{S}_{j} \cup \{\mathbf{a}_{1},...,\mathbf{a}_{k-1}\}).
\end{align}
Under Condition~\ref{condition:submodular}, as $\mathcal{S}_j\subset\mathcal{S}_j\cup\{\mathbf{a}_{1},...,\mathbf{a}_{k-1}\}$, we have 
\begin{align}
    f_{\mathcal{G}}(\mathcal{S}_j\cup\mathcal{S}_i)-f_{\mathcal{G}}(\mathcal{S}_j)&=\psi_{\mathbf{a}_1}(\mathcal{S}_j)+\sum_{k=2}^r\psi_{\mathbf{a}_k}(\mathcal{S}_{j} \cup \{\mathbf{a}_{1},...,\mathbf{a}_{k-1}\})\\\nonumber
    &\leq\sum_{k=1}^r\psi_{\mathbf{a}_k}(\mathcal{S}_j)=\sum_{\mathbf{a}\in\mathcal{S}_i-\mathcal{S}_j}\psi_{\mathbf{a}}(\mathcal{S}_j).
\end{align}
Similarly, 
\begin{align}\label{eq:lemma-step4}
    & \quad f_{\mathcal{G}}(\mathcal{S}_j\cup\mathcal{S}_i)-f_{\mathcal{G}}(\mathcal{S}_i)\\\nonumber
    &= \psi_{\mathbf{b}_1}(\mathcal{S}_i)+\sum_{k=2}^q\psi_{\mathbf{b}_k}(\mathcal{S}_{i} \cup \{\mathbf{b}_{1},...,\mathbf{b}_{k-1}\})\geq \sum_{k=1}^q \psi_{\mathbf{b}_k}(\mathcal{S}_i\cup\mathcal{S}_{j}-\mathbf{b}_k)=\sum_{\mathbf{b}\in\mathcal{S}_j-\mathcal{S}_i}\psi_{\mathbf{b}}(\mathcal{S}_i).
\end{align}
By subtracting (\ref{eq:lemma-step4}) from (\ref{eq:lemma-step3}), we have 
\begin{equation}
    f_{\mathcal{G}}(\mathcal{S}_i)-f_{\mathcal{G}}(\mathcal{S}_j)\leq\sum_{\mathbf{v}\in\mathcal{S}_i-\mathcal{S}_j}\psi_{\mathbf{v}}(\mathcal{S}_j)-\sum_{\mathbf{v}\in\mathcal{S}_j-\mathcal{S}_i}\psi_{\mathbf{v}}(\mathcal{S}_i\cup\mathcal{S}_j-\mathbf{v}).
\end{equation}
\end{proof}
\subsubsection{Lemma~2}
\begin{lem}\label{lem:2}
    Given a graph $\mathcal{G}= (\mathbf{V},\mathbf{E}, \mathbf{P})$, for any subset $\mathcal{S}\subset\mathbf{V}$ and any $\mathbf{v}\in\mathbf{V}$, the influence function $f_\mathcal{G}$ satisfies 
    \begin{equation}
        \psi_{\mathbf{v}}(\mathcal{S}) = f_{\mathcal{G}}(\mathcal{S} \cup \mathbf{v}) - f_{\mathcal{G}}(\mathcal{S})\geq  0
    \end{equation}
\end{lem}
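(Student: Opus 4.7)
The plan is to establish monotonicity of the influence function $f_\mathcal{G}$ by a sample-path coupling argument on the independent-cascade diffusion of Algorithm~\ref{alg:influence_function}. Since the output $L$ of the algorithm depends on the random draws $\tau$ produced inside the inner loop, I would first make explicit that $f_\mathcal{G}(\mathcal{S})$ denotes the expected number of activated vertices (equivalently, the inequality can be argued pointwise under any fixed realization of those draws, and the two interpretations coincide by linearity of expectation).

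First, I would couple the two diffusion processes seeded by $\mathcal{S}$ and by $\mathcal{S} \cup \mathbf{v}$ by assigning a single threshold $\tau_{(\mathbf{u},\mathbf{w})} \in [0,1]$ to every directed edge $(\mathbf{u},\mathbf{w}) \in \mathbf{E}$, and declaring that edge ``live'' iff $\tau_{(\mathbf{u},\mathbf{w})} \le p(\mathbf{u},\mathbf{w})$. Both processes then use the very same edge labels. Under this coupling, the final activated set starting from a seed set $\mathcal{T}$ is exactly the set of vertices reachable from $\mathcal{T}$ in the subgraph of live edges; this is the standard live-edge reformulation of the independent-cascade model, and it turns a stochastic question into a deterministic reachability question on a random subgraph.

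Second, I would verify by a short induction on the iteration index of the while-loop that whenever $\mathcal{T}_a \subseteq \mathcal{T}_b$, the set of vertices activated by step $t$ starting from $\mathcal{T}_a$ is contained in the set of vertices activated by step $t$ starting from $\mathcal{T}_b$: the base case $t=0$ is the seed inclusion itself, and the inductive step follows because an edge fires purely according to its pre-assigned threshold, so any activation path realized from the smaller seed is also realized from the larger seed. Specializing to $\mathcal{T}_a = \mathcal{S}$ and $\mathcal{T}_b = \mathcal{S} \cup \mathbf{v}$ therefore gives $L(\mathcal{S}) \le L(\mathcal{S} \cup \mathbf{v})$ under every realization of the thresholds. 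Taking expectations yields $f_\mathcal{G}(\mathcal{S} \cup \mathbf{v}) - f_\mathcal{G}(\mathcal{S}) \ge 0$, which is the claim.

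There is no real obstacle here: once the coupling is in place, monotonicity is immediate. The only subtlety is that $f_\mathcal{G}$ in Algorithm~\ref{alg:influence_function} is defined through a randomized process, so the inequality must be interpreted either pointwise under a common coupling or after taking expectations; writing down the live-edge coupling explicitly is the cleanest way to make that interpretation unambiguous and to avoid comparing outputs of two unrelated random experiments.
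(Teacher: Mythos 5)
Your proof is correct, but it takes a genuinely different and more careful route than the paper. The paper's own proof is a two-line case analysis: if $\mathbf{v}\in\mathcal{S}$ the marginal gain is exactly $0$ because the seed is unchanged, and if $\mathbf{v}\notin\mathcal{S}$ the gain is at least $1$ because $\mathbf{v}$ itself is newly activated; nonnegativity follows in either case. That argument implicitly assumes precisely what you prove explicitly, namely that enlarging the seed set can only enlarge the final activated set --- a statement that, for a stochastic diffusion, is not even well posed pointwise unless the two runs are coupled. Your live-edge construction (fix one threshold per edge, reduce activation to reachability in the live subgraph, observe that reachability is monotone in the seed, then take expectations) is the standard Kempe--Kleinberg--Tardos device and supplies the justification the paper leaves tacit; it also cleanly resolves the fact that $f_\mathcal{G}$ as computed by Algorithm~\ref{alg:influence_function} is a random quantity, which the paper never addresses. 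One point worth flagging in both directions: for the lemma (and the paper's ``at least $1$'' claim) to hold, the influence count must include the seed vertices themselves. The reachable-set formulation you adopt does this, whereas the pseudocode's counter $L$ is initialized to $0$ and tallies only newly activated vertices; under that literal reading the marginal gain could be negative when $\mathbf{v}$ would have been activated anyway from $\mathcal{S}$. Your interpretation is clearly the intended one, but it would be worth one sentence making the convention explicit.
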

\begin{proof}
    We consider two cases to finish the proof.

\textit{Case~1}~($\mathbf{v} \in \mathbf{V} \land \mathbf{v} \notin \mathcal{S}$). In this case, the influence improvement is at least 1 since $\mathbf{v}$ itself has been included, i.e., 
\begin{equation}
    \psi_{\mathbf{v}}(\mathcal{S})=f_{\mathcal{G}}(\mathcal{S}\cup\mathbf{v})-f_{\mathcal{G}}(\mathcal{S})\geq 1.
\end{equation}
\textit{Case~2}~($\mathbf{v} \in \mathbf{V} \land \mathbf{v} \in \mathcal{S}$). In this case, the influence improvement is 0 since $\mathbf{v}$ has already been included in $\mathcal{S}$, i.e., 
\begin{equation}
    \psi_{\mathbf{v}}(\mathcal{S})=f_{\mathcal{G}}(\mathcal{S}\cup\mathbf{v})-f_{\mathcal{G}}(\mathcal{S})=0.
\end{equation}
Combining the above two cases, we conclude that, for $\forall \mathbf{v}\in\mathbf{V}$, the influence function $f_{\mathcal{G}}$ satisfies 
\begin{equation}
    f_\mathcal{G}(\mathcal{S}\cup\mathbf{v})-f_\mathcal{G}(\mathcal{S})\geq 0.
\end{equation}

\end{proof}

\subsection{Proof of Proposition~\ref{prop1}}
\begin{proof}
Given a graph $\mathcal{G}=(\mathbf{V},\mathbf{E},\mathbf{P})$, for $\forall \mathcal{S}_i,\mathcal{S}_j\subset\mathbf{V}$, according to Lemma~\ref{lem:2}, we have 
\begin{equation}\label{eq:pro1-step2}
    \sum_{\mathbf{v}\in\mathcal{S}_i-\mathcal{S}_j}\psi_\mathbf{v}(\mathcal{S}_i\cup\mathcal{S}_j-\mathbf{v})\geq 0.
\end{equation}
Taking (\ref{eq:pro1-step2}) into Lemma~\ref{lem:1}, we have 
\begin{equation}\label{eq:pro1-step3}
    f_\mathcal{G}(\mathcal{S}_i)-f_\mathcal{G}(\mathcal{S}_j)\leq\sum_{\mathbf{v}\in\mathcal{S}_i-\mathcal{S}_j}\psi_\mathbf{v}(S_j).
\end{equation}
We use $\mathcal{S}_m^*$ to denote the optimal solution as discussed in the main paper. At any step $t$ in Algorithm~\ref{alg:greedy-influence-maximization}, we substitute $\mathcal{S}_m^*$ (resp. $\mathcal{S}_t$) into $\mathcal{S}_i$ (resp. $\mathcal{S}_j$) in (\ref{eq:pro1-step3}), we can derive 
\begin{equation}\label{eq:pro1-step4}
    f_\mathcal{G}(\mathcal{S}_m^*)\leq f_\mathcal{G}(\mathcal{S}_t)+\sum_{\mathbf{v}\in\mathcal{S}_m^*-\mathcal{S}_t}\psi_\mathbf{v}(\mathcal{S}_t).
\end{equation}
According to Condition~\ref{condition:submodular}, 
\begin{equation}\label{eq:pro1-step5}
    \psi_\mathbf{v}(\mathcal{S}_t)\geq\psi_\mathbf{v}(\mathcal{S}_{t+1})
\end{equation}
holds. Taking both (\ref{eq:pro1-step4}) and (\ref{eq:pro1-step5}) into (\ref{eq:pro1-step3}), we have for any $t$,
\begin{align}
\label{eq:pro1-step6}
f_{\mathcal{G}}(\mathcal{S}^{*}_{m})   &\leq  f_{\mathcal{G}}( \mathcal{S}_{t}) + m\psi_{t+1}.
\end{align}
\end{proof}

\subsection{Proof of Theorem~\ref{them1}} 
\begin{proof}
Recall that 
\begin{align}
\psi_{t} = f_{\mathcal{G}}(\mathcal{S}_{t}) -  f_{\mathcal{G}}(\mathcal{S}_{t-1}).
\end{align}
According to Proposition~\ref{prop1}, we have
\begin{align}
\label{eq:thm-step2}
f_{\mathcal{G}}(\mathcal{S}^{*}_{m})&-  f_{\mathcal{G}}(\mathcal{S}_{t}) \leq  m \psi_{t+1}= m(f_{\mathcal{G}}(\mathcal{S}_{t+1}) -  f_{\mathcal{G}}(\mathcal{S}_{t})).
\end{align}
Afterwards, (\ref{eq:thm-step2}) equals to,
\begin{align}
\label{eq:thm-step3}
&\quad \quad f_{\mathcal{G}}(\mathcal{S}^{*}_{m})-  f_{\mathcal{G}}(\mathcal{S}_{t})  -  (f_{\mathcal{G}}(\mathcal{S}^{*}_{m})-f_{\mathcal{G}}(\mathcal{S}_{t+1}) )
\geq   \frac{1}{m}(f_{\mathcal{G}}(\mathcal{S}^{*}_{m})-  f_{\mathcal{G}}(\mathcal{S}_{t})) \\ \nonumber
&\iff f_{\mathcal{G}}(\mathcal{S}^{*}_{m})-  f_{\mathcal{G}}(\mathcal{S}_{t+1}) \leq \frac{m-1}{m}(f_{\mathcal{G}}(\mathcal{S}^{*}_{m})-  f_{\mathcal{G}}(\mathcal{S}_{t}) ).
\end{align}

Based on (\ref{eq:thm-step3}), we have
\begin{align}
&\quad f_{\mathcal{G}}(\mathcal{S}^{*}_{m})-  f_{\mathcal{G}}(\mathcal{S}_{t+1}) \leq \frac{m-1}{m}(f_{\mathcal{G}}(\mathcal{S}^{*}_{m})-  f_{\mathcal{G}}(\mathcal{S}_{t})  ) \\ \nonumber
& \leq (\frac{m-1}{m})^2(f_{\mathcal{G}}(\mathcal{S}^{*}_{m})-  f_{\mathcal{G}}(\mathcal{S}_{t-1}) ) \\ \nonumber
& \leq ...  \leq (\frac{m-1}{m})^{t+1}(f_{\mathcal{G}}(\mathcal{S}_{*}^{m})-  f_{\mathcal{G}}(\mathcal{S}_{0})  ). 
\end{align}
Since $f_{\mathcal{G}}(\mathcal{S}_{0})  = f_{\mathcal{G}}(\emptyset)  = 0$, we have 
\begin{align}
\frac{f_{\mathcal{G}}(\mathcal{S}^{*}_{m})-  f_{\mathcal{G}}(\mathcal{S}_{t+1})}{f_{\mathcal{G}}(\mathcal{S}^{*}_{m})} \leq (\frac{m-1}{m})^{t+1}.
\end{align}
When Algorithm~\ref{alg:greedy-influence-maximization} terminates at step $t=m-1$, we have, 
\begin{align}
f_{\mathcal{G}}(\mathcal{S}_{m}) \geq (1-(1-1/m)^m) f_{\mathcal{G}}(\mathcal{S}^{*}_{m}).
\end{align}
\end{proof}

\newpage
\section{Supplementary Experimental Results}
\label{app:exp}
\subsection{Selected examples}\label{app:select_examples}

In Table~\ref{tab:generation_examples}, for illustration purposes, we provide a few examples from the selection by our method, when the annotation size is 18.

\begin{table*}[h]
\begin{center}
\resizebox{\textwidth}{!}{%
\begin{tabular}{p{0.21\linewidth} p{0.95\linewidth}}
\toprule
Dataset &  Input\\
\midrule
MRPC & \begin{tabular}[t]{@{}p{\linewidth}@{}}
a. Input: The two Democrats on the five-member FCC held a press conference to sway opinion against [...] \\
~~~~Output: not equivalent \\ 
a. Input: The report shows that drugs sold in Canadian pharmacies are manufactured in facilities approved by Health Canada [...] \\ 
~~~~Output: equivalent \\ 
c. Input: The chief merchandising officer decides what the store is going to sell [...] \\
~~~~Output: equivalent \\
\end{tabular} \\
\midrule

SST-5 & \begin{tabular}[t]{@{}p{\linewidth}@{}}
a. Input: plodding, poorly written, murky and weakly acted, the picture feels as if everyone making it lost their movie mojo. \\
~~~~Output: very negative \\
b. Input: duvall is strong as always .  \\
~~~~Output: very positive \\
c. Input: lohman adapts to the changes required of her , but the actress and director peter kosminsky never get the audience to break [...] \\
~~~~Output: neutral \\
\end{tabular} \\
\midrule
MNLI & \begin{tabular}[t]{@{}p{\linewidth}@{}}
a. Input: This prosperous city has many museums, including a well-endowed Musee des Beaux-Arts (Square Verdrel) [...] \\
~~~~Output: False \\
b. Input: Duhame, who today makes her living as a graphic designer and illustrator, calls her book [...] \\
~~~~Output: Inconclusive \\
c. Input: At the agency or program level, it included management's public commitment to reduce fraud and errors, as. Based on that information [...] \\
~~~~Output: True \\
\end{tabular} \\ 
\midrule
DBpedia & \begin{tabular}[t]{@{}p{\linewidth}@{}}
a. Input: Lars Nielsen (born 3 November 1960 in Copenhagen) is a Danish rower. \\
~~~~Output: athlete \\ 
b. Input: Calhoun County High School is a public secondary school in St. Matthews South Carolina USA. \\
~~~~Output: educational institution \\ 
c. Input: David Goldschmid (sometimes credited as Dave Goldschmid) is an American television writer and producer currently writing for the daytime drama General Hospital. \\
~~~~Output: artist \\ 
\end{tabular} \\
\midrule
RTE & \begin{tabular}[t]{@{}p{\linewidth}@{}}
a. Input: In sub-Saharan Africa about one in every 30 people is infected with HIV.. 30\% of the people infected with HIV live in Africa.. \\
~~~~Output: False \\ 
b. Input: The drawbacks of legalization do not imply that our current version of prohibition is the optimal drug strategy; it may well [...] \\
~~~~Output: False \\ 
c. Input: For example, the fields of Western farmers feed the United States and many other parts of the world, and India's irrigation [...] \\
~~~~Output: True \\ 
\end{tabular} \\
\midrule
HellaSwag & \begin{tabular}[t]{@{}p{\linewidth}@{}}
a. Input: The topic is Preparing salad. An illustrated egg, the website "startcooking com" and "vegetable salad" [...] \\ 
~~~~Output: is shown from above. \\  
b. Input: The topic is Pets and Animals. [header] How to treat an injured rabbit's paw [title] Identify sore hocks. [step] Pododermatitis [...] \\ 
~~~~Output: Once the condition has set in, though, you'll need to take quick action to treat the injury. Leaving [...] \\  
c. Input: The topic is Playing squash. Two men stand on a racquetball court. the men \\ 
~~~~Output: stretch then begin playing. \\  
\end{tabular} \\
\bottomrule
\end{tabular}
}
\end{center}
\caption{
For illustration purposes, under our method, we show randomly selected three examples from each of the six datasets in one same run (excluding the other three datasets due to their length) when the annotation budget is set to 18. 
}
\label{tab:generation_examples}
\end{table*}

\subsection{Visualization of selected examples}\label{appendix:umap}
Here we provide a umap~\cite{mcinnes2018umap-software} visualization of selected examples. To avoid the denseness, we choose the annotation budget as 5. The visualization can be checked in Figure~\ref{fig:generation_examples}. First, comparing subfigures (a) and (b), we can clearly see that the selection of Vote-$k$ is much biased, and our IDEAL can identify a subset that is more favorable to be a proxy of full data. Second, comparing subfigures (c) and (d), we can see that the selected subset by Vote-$k$ is distributed on the right of full data. By comparison, our IDEAL can select a subset that is distributed more uniformly.

\begin{figure}[h]
    \centering
    \begin{subfigure}[b]{0.45\textwidth}
        \includegraphics[width=\textwidth]{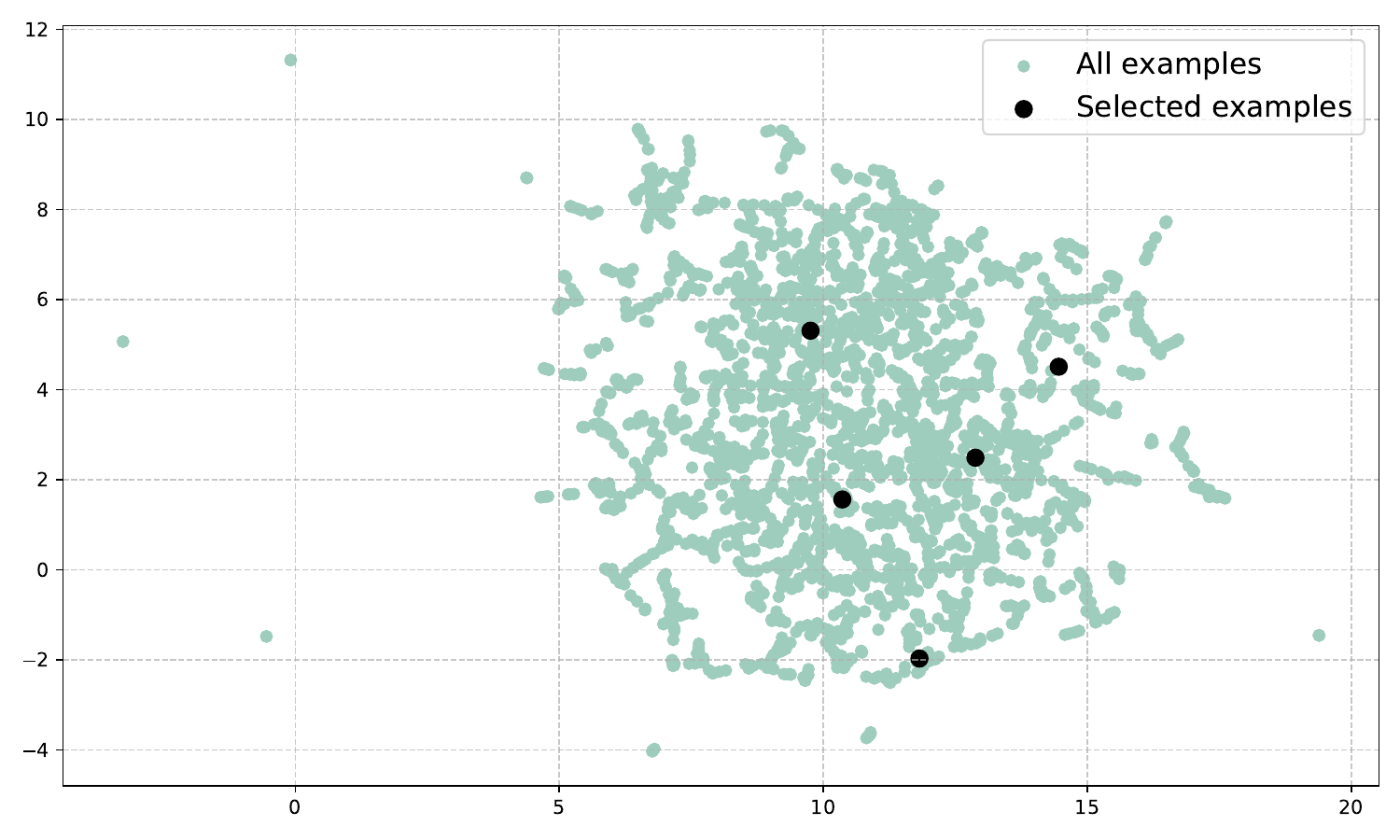}
        \caption{SST-5, Vote-$k$.}
    \end{subfigure}
    \begin{subfigure}[b]{0.45\textwidth}
    \includegraphics[width=\textwidth]{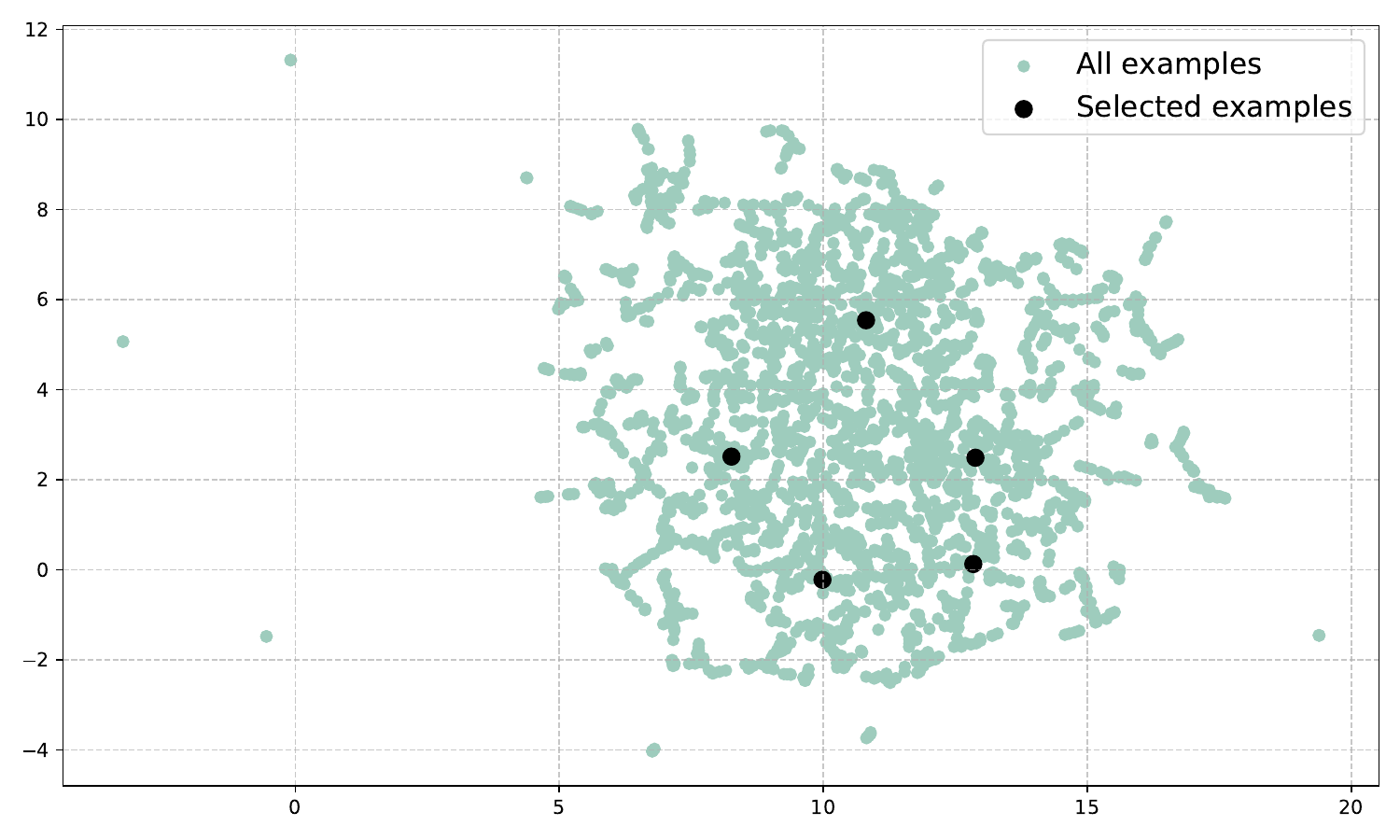}
    \caption{SST5, IDEAL.}
    \end{subfigure}
    \begin{subfigure}[b]{0.45\textwidth}
        \includegraphics[width=\textwidth]{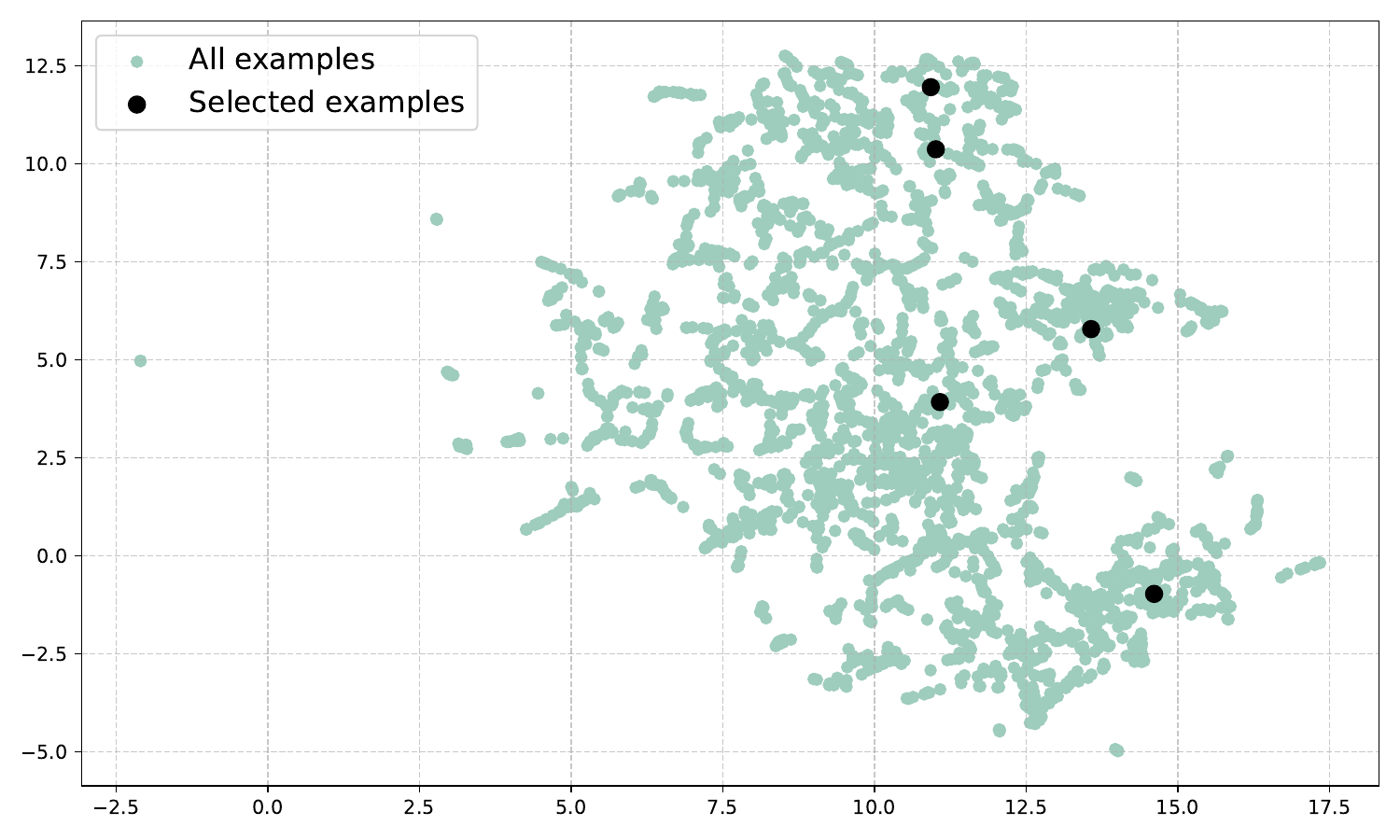}
    \caption{MNLI, Vote-$k$.}
    \end{subfigure}
    \begin{subfigure}[b]{0.45\textwidth}
    \includegraphics[width=\textwidth]{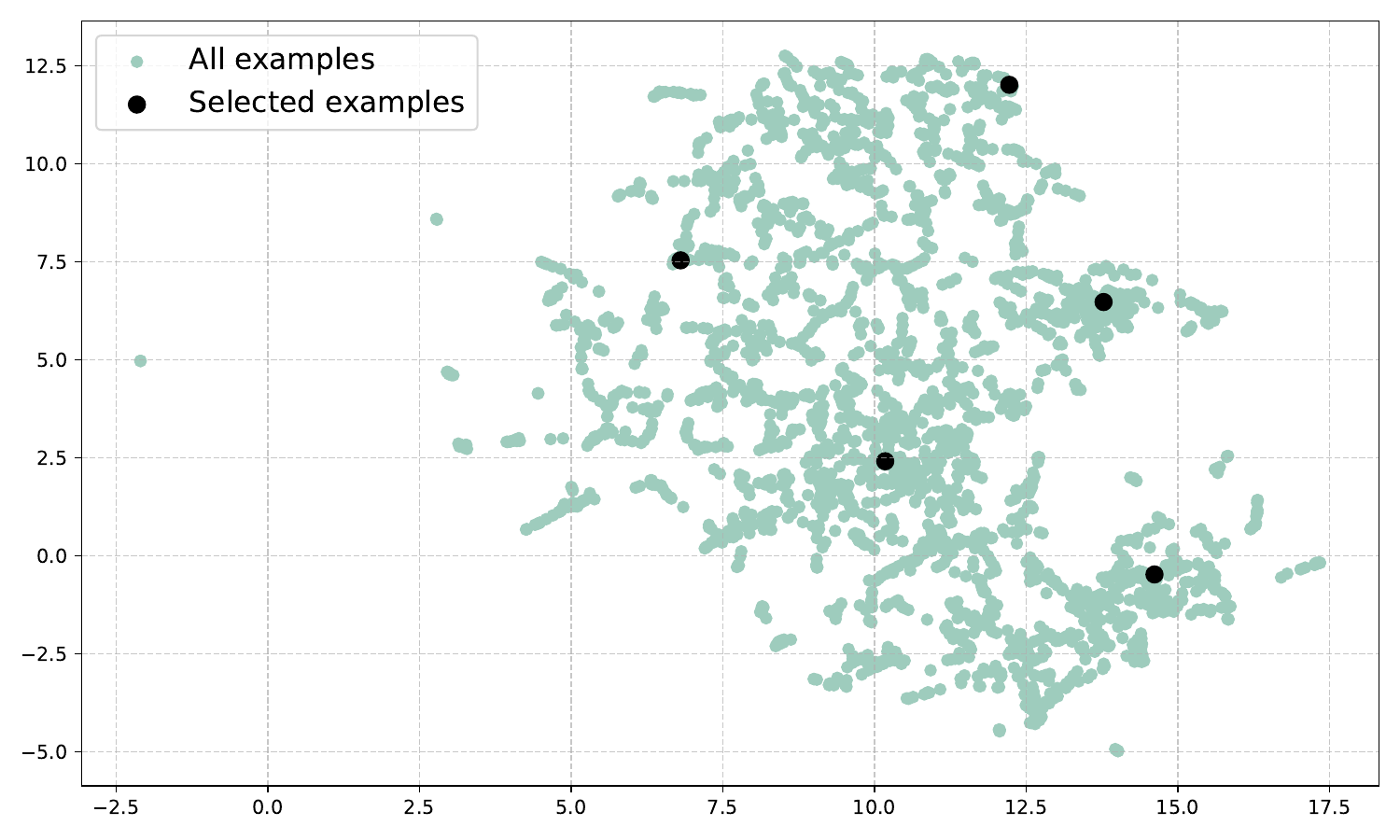}
    \caption{MNLI, IDEAL.}
    \end{subfigure}
    \caption{Umap~\cite{mcinnes2018umap-software} visualization to compare five selected examples from all examples using fully unsupervised methods Vote-$k$ and IDEAL~(ours). Compared with Vote-$k$, IDEAL could choose the examples to better represent the whole data rather than get involved in diversity and including outliers.}
    \label{fig:generation_examples}
\end{figure}

\subsection{Detailed experimental results in Table~\ref{tab:main_results}}\label{app:max_min}
\begin{table}[h!]
\addtolength{\tabcolsep}{-2pt} 
\renewcommand{\arraystretch}{1} 
\footnotesize 
\centering
\begin{tabular}{ cc   c  cccccc }
\toprule

\multicolumn{2}{c}{\textbf{Method}}
& \textbf{MRPC}
& \textbf{SST-5}
& \textbf{MNLI}
& \textbf{DBpedia}
& \textbf{RTE}
\\
\midrule[.1em]

100
&Random
& 64.3/68.4/58.6
&  49.6/51.1/47.2
&  38.2/40.2/36.7
&  89.8/91.0/88.2
&  55.3/55.9/55.1
\\

100
&\Votek
& 64.6/68.8/62.1
& 46.6/47.2/46.1
& 38.9/43.8/35.5
& 89.2/89.8/88.7
& 57.6/58.2/57.4
\\

\rowcolor{gray!30} 
100
&IDEAL
& \textbf{66.4}/67.9/64.8
& \textbf{51.4}/53.5/49.6
& \textbf{41.0}/41.4/40.2
& \textbf{90.6}/91.4/89.5
& \textbf{58.9}/60.9/57.4
\\

\midrule[.05em]

18
&Random
& 57.4/68.8/39.8
& 42.9/46.9/39.1
& 37.8/39.4/35.2
&  85.2/87.5/83.9
& 57.9/58.9/57.0
\\

18
&\Votek
& 61.1/67.2/52.7
& 41.7/45.7/37.1
& 39.1/43.8/32.0
& 89.9/94.1/87.1
& 58.2/58.9/57.8
\\
\rowcolor{gray!30} 
18
&IDEAL
& \textbf{63.0}/63.7/62.5
& \textbf{43.2}/45.7/39.5
& \textbf{40.0}/41.8/37.1
& \textbf{90.1}/90.2/89.8
& \textbf{59.4}/60.9/57.8
\\

\bottomrule
\end{tabular}
\caption{Mean/maximum/minimum evaluation results of all methods on classification tasks in Table~\ref{tab:main_results} over three different trials. The best mean result in each case is \textbf{bolded}.}
\label{tab:app-main-results-1}
\end{table}

\begin{table}[h!]
\addtolength{\tabcolsep}{-0.1pt} 
\renewcommand{\arraystretch}{1} 
\footnotesize 
\centering
\begin{tabular}{cc   c  ccccc }
\toprule

\multicolumn{2}{c}{\textbf{Method}}
& \textbf{HellaSwag}
& \textbf{MWoZ}
& \textbf{GeoQ}
& \textbf{Xsum}
\\
\midrule[.1em]

100
&Random
& 66.7/70.3/64.1
& 39.9/48.4/39.9
& 55.3/57.8/53.1
& 15.3/16.4/14.8
\\

100
&\Votek
& 67.9/69.9/64.0
& 48.3/50.8/46.9
& \textbf{58.8}/60.5/57.0
& 17.2/17.6/16.4
\\
\rowcolor{gray!30} 
100
&IDEAL
& \textbf{68.6}/71.9/65.2
& \textbf{52.2}/55.9/49.1
& 58.2/60.5/54.7
& \textbf{19.9}/20.2/19.5
\\
\midrule[.05em]

18
&Random
& 66.0/68.8/63.7
& 37.0/46.5/28.1
& 47.5/49.2/44.9
& 13.6/14.5/12.5
\\

18
&\Votek
& 66.5/71.9/62.5
& 37.7/43.8/32.4
& 50.9/54.3/47.7
& 15.2/16.0/14.5
\\
\rowcolor{gray!30} 
18
&IDEAL
& \textbf{67.1}/71.9/64.5
& \textbf{38.5}/47.3/30.9
& \textbf{52.0}/53.9/50.8
& \textbf{19.6}/20.2/18.9
\\

\bottomrule
\end{tabular}
\caption{Mean/maximum/minimum evaluation results of all methods on multi-choice, dialogue, and generation tasks in Table~\ref{tab:main_results} over three different trials. The best mean result in each case is \textbf{bolded}.}
\label{tab:app-main-results-2}
\end{table}

In the main paper~(Table~\ref{tab:main_results}), we report the mean evaluation results for different methods over three random trials. Here we provided the detailed results of Table~\ref{tab:main_results} with mean/maximum/minimum values. 
We can observe IDEAL achieves stable results compared with baselines. Moreover, the worst-case performance of IDEAL is obviously better compared with baselines in most cases. 

\subsection{Label distributions in selective annotations}\label{appendix:label_statistic}

Recall that the process of selective annotations is based entirely on similarities derived from sentence embeddings without labels. Therefore, we investigate whether the selected examples have label skew. Under an annotation budget of 100, we collect all selected examples in three classification tasks (MRPC, MNLI, and RTE) and show the numbers of different labels for different methods in Table~\ref{tab:label_distribution}. We also present the label statistics of the original training data.
We observe that random selection shows a great variance. However, in an ideal case, it should achieve a similar distribution as the original training data. Notably, IDEAL achieves the smallest ratio between the numbers of the most frequent class and the least frequent class in 2 out of 3 cases (MNLI and RTE). This demonstrates that IDEAL can indeed balance the label distribution in the selected subset and mitigate the problem of label skew.

\begin{table}[!t]
\centering
\begin{tabular}{cccccccc}
\toprule
\multirow{2}{*}{\textbf{Method}} & \multicolumn{2}{c}{\textbf{MRPC}} & \multicolumn{3}{c}{\textbf{MNLI}} & \multicolumn{2}{c}{\textbf{RTE}} \\ 
\cmidrule(lr){2-3}
\cmidrule(lr){4-6}
\cmidrule(lr){7-8}
                        & Equivalent     & Not equivalent    & True      & Inconclusive & False   & True     & False   \\ \hline
Original                  &2023            &977               &1051       & 965          & 984     & 1241     & 1249    \\
Random                  & 70             & 30                & 30        & 39           & 31      & 56       & 44      \\
Vote-$k$                & 64             & 36                & 27        & 35           & 38      & 46       & 54      \\
\rowcolor{gray!30}  IDEAL   &  65           & 35                & 37        & 34           & 29      & 49       & 51      \\ \midrule[.1em]
\end{tabular}
\caption{The numbers of different labels in the selected examples for different methods. ``Original'' denotes the label statistics of the original dataset. Under the annotation budget 100, IDEAL achieves the smallest ratio between the numbers of the most frequent class and the least frequent class in 2 out of 3 cases (MNLI and RTE), implying IDEAL can indeed mitigate the label skew problem.}
\label{tab:label_distribution}
\end{table}

\begin{table}[!t]
\centering
\begin{tabular}{ccccccc}
\toprule
\multirow{2}{*}{\textbf{Method}} & \multicolumn{2}{c}{\textbf{MRPC}} & \multicolumn{2}{c}{\textbf{SST-5}} & \multicolumn{2}{c}{\textbf{RTE}} \\ 
\cmidrule(lr){2-3}
\cmidrule(lr){4-5}
\cmidrule(lr){6-7}
                        & Mean     & Std    & Mean      & Std   & Mean     & Std   \\ \hline
Random                  &  44.2           &  0.02               & 45.4        &  0.02     & 57.3       & 0.02      \\
Vote-$k$                & 52.7             &  0.03               & 38.0         & 0.01      & 58.6        & 0.02       \\
\rowcolor{gray!30}  IDEAL   & 65.5            &  0.01               & 46.6     & 0.01      & 57.5       & 0.02      \\ \midrule[.1em]
\end{tabular}
\caption{The average performance of different methods by permuting the order of prompts for each test instance 10 times. We conduct experiments on MRPC, SST-5, and RTE datasets and report the average results with standard deviation. We can observe the subset selected by IDEAL achieves the best performance compared with baselines in 2 out of 3 cases. IDEAL also achieves the lowest standard deviations in all evaluations, which suggests IDEAL is a more stable and robust method against the order of prompts.}
\label{tab:prompts_order}
\end{table}

\subsection{Prompt order in selective annotation}\label{prompts_order}
As pointed out by ~\cite{lu2021fantastically}, the performance of in-context learning is influenced not only by the selection of prompts but also by the order in which the prompts are presented to models. 
Although this work focuses solely on selective annotation problems, we are interested in exploring whether the selected subset can still lead to better performance when the order of prompts is permuted.
Under an annotation budget of 18, we first retrieve prompts for each test instance from selected subsets achieved by different selective annotation methods. We then permute the order of prompts for each test instance 10 times, resulting in 10 different experimental trials. We show the average performance of these 10 trials and make a comparison between different selective annotation methods.
We conduct experiments on MRPC, SST-5, and RTE datasets and present the results in Table~\ref{tab:prompts_order}.
The results show that IDEAL outperforms baselines in 2 out of 3 cases, suggesting that our method can choose more stable and robust subsets against changed prompt orders.


\section{Supplementary Descriptions of Experimental Settings}
\subsection{Details of datasets}\label{app:datasets}
In this paper, to demonstrate the superiority of our method, we employ 9 datasets which can be categorized into 4 different tasks, including \textit{classification} (MRPC~\cite{dolan2004unsupervised}, SST-5~\cite{socher2013recursive}, MNLI~\cite{williams2017broad}, DBpedia~\cite{lehmann2015dbpedia}, and RTE~\cite{bentivogli2009fifth}), \textit{multi-choice} (HellaSwag~\cite{zellers2019hellaswag}), \textit{dialogue} (MWoZ~\cite{budzianowski2018multiwoz}), and \textit{generation} (GeoQuery~\cite{zelle1996learning} and Xsum~\cite{narayan2018don}). We list the datasets and the models used in Table~\ref{tab:datasets_model}.

\begin{table*}[!h]
\begin{adjustbox}{width=0.96\linewidth}
\begin{tabular}{@{}l@{}l@{}c@{}c@{}}
\toprule
& Datasets & Task & Models \\ 
\midrule[.005em]
\multirow{5}*{\textbf{Classification}} & MRPC~\cite{dolan2004unsupervised} & Paraphrase Detection
&
GPT-Neo, GPT-J, GPT-3.5-Turbo
\\
& SST-5~\cite{socher2013recursive} & Sentiment Analysis & GPT-J
\\
& DBpedia~\cite{lehmann2015dbpedia} & Topic Classification & 
GPT-J 
\\
& RTE~\cite{bentivogli2009fifth}) & \ \ Natural Language Inference \ \ & 
GPT-Neo, GPT-J, GPT-3.5-Turbo
\\
& MNLI~\cite{williams2017broad} & Natural Language Inference & 
GPT-Neo, GPT-J, GPT-3.5-Turbo
\\

\midrule[.005em]
\multirow{1}*{\textbf{Multiple-Choice}} \ \  & HellaSwag~\cite{zellers2019hellaswag} & Commonsense \par Reasoning &
GPT-J
\\
\midrule[.005em]
\multirow{1}*{\textbf{Dialogue}} & MWoZ~\cite{budzianowski2018multiwoz} & Dialogue State Tracking & 
Text-davinci-002
\\
\midrule[.005em]
\multirow{2}*{\textbf{Generation}} & GeoQuery~\cite{zelle1996learning} & Semantic Parsing &
Text-davinci-002
\\
& Xsum~\cite{narayan2018don} & Summarization & 
GPT-J
\\
\bottomrule
\end{tabular}
\end{adjustbox}
\caption{
The datasets and corresponding models used in our experiments. We use GPT-J 6B and Text-davinci-002 by default. Other large language models are explored in \S\ref{other_evaluation}. 
}

\label{tab:datasets_model}
\end{table*}

To help readers better understand the datasets and tasks, for each of these datasets, we also list one example including both the input and output.

\subsubsection{MRPC}

\textbf{Input}

\begin{lstlisting}[language={}, gobble=0]
    Are the following two sentences 'equivalent' or 'not equivalent'?\nA federal judge yesterday disconnected a new national \" do-not-call \" list , just days before it was to take effect , saying the agency that created it lacked the authority ..\nA federal judge yesterday struck down the national do-not-call registry slated to take effect Oct. 1 , ruling the Federal Trade Commission had no authority to create the list ..\nanswer:
\end{lstlisting}
\textbf{Output}
\begin{lstlisting}[language={}]
    equivalent
\end{lstlisting}
\subsubsection{SST-5}
\textbf{Input}

\begin{lstlisting}[language={}]
    How do you feel about the following sentence?\nsmug , artificial , ill-constructed and fatally overlong ... it never finds a consistent tone and lacks bite , degenerating into a pious , preachy soap opera .\nanswer:
\end{lstlisting}
\textbf{Output}
\begin{lstlisting}[language={}]
    neutral
\end{lstlisting}
\subsubsection{MNLI}
\textbf{Input}
\begin{lstlisting}[language={}]
    yeah well the Cardinals i don't know  i think the Cowboys probably have a a better team they just at the end of the season the kind of got messed up with Aikman getting hurt because uh Laufenberg just couldn't never really get it together at all of course he sat along the sidelines all season he never really got in a game never did a whole lot. Based on that information, is the claim The Cowboys should have started Laufenberg all season.  \"True\", \"False\", or \"Inconclusive\"?\nanswer:
\end{lstlisting}
\textbf{Output}
\begin{lstlisting}[language={}]
    Inconclusive
\end{lstlisting}
\subsubsection{Dbpedia}
\textbf{Input}
\begin{lstlisting}[language={}, gobble=0,]
    title: V\u00edctor David Loubriel; content:  V\u00edctor David Loubriel Ort\u00edz is a Puerto Rican politician and former member of the Senate of Puerto Rico for the New Progressive Party (PNP).Loubriel presented his candidacy for the Senate of Puerto Rico before 2004. He ran for a candidate slot in the 2003 primaries obtaining the most votes in his district (Arecibo).In the 2004 general election Loubriel won a seat in the 23rd Senate of Puerto Rico to represent the district of Arecibo along with Jos\u00e9 Emilio Gonz\u00e1lez Vel\u00e1zquez.
\end{lstlisting}
\textbf{Output}
\begin{lstlisting}
    office holder
\end{lstlisting}
\subsubsection{RTE}
\textbf{Input}
\begin{lstlisting}[language={}]
    MEXICO CITY (Reuters) - A deadly strain of swine flu never seen before has broken out in Mexico, killing as many as 60 people and raising fears it is spreading across North America. The World Health Organization said it was concerned about what it called 800 \"influenza-like\" cases in Mexico, and also about a confirmed outbreak of a new strain of swine flu in the United States. It said about 60 people had died in Mexico. Mexico's government said it had confirmed that at least 16 people had died of the swine flu in central Mexico and that there could be another 45 fatal victims..\nquestion: 800 Mexicans have been affected by a new form of swine influenza.. True or False?\nanswer:
\end{lstlisting}

\textbf{Output}
\begin{lstlisting}
    True
\end{lstlisting}
\subsubsection{HellaSwag}
\textbf{Input}
\begin{lstlisting}[language={}]
    The topic is Work World. [header] How to become a high school social studies teacher [title] Obtain your bachelor's degree in education. [step] All schools will require you to obtain at least your bachelor's degree in education. This degree will be proof that you are capable of delivering information to students using the current educational best practices.
\end{lstlisting}
\textbf{Output}
\begin{lstlisting}[language={}]
    Make sure you've fully completed all of your course work and obtained your bachelor's degree before you seek certification or employment. [substeps] Your electives should be based in social studies courses.
\end{lstlisting}

\subsubsection{MultiWoz}
\textbf{Input}
\begin{lstlisting}[language={}]
    CREATE TABLE hotel(
      name text,
      ......,
      internet text CHECK (internet IN (dontcare, yes, no))
    )
    /*
    4 example rows:
    SELECT * FROM hotel LIMIT 4;
    name  pricerange  type  parking book_number_of_days book_day  book_people 
    area  stars internet
    a and b guest house moderate guest house dontcare 3 friday 5 east 4 yes
    ......
    /*
    ......
    -- Using valid SQLite, answer the following multi-turn conversational 
    questions for the tables provided above.
    Example #1
    [context] hotel-area: west, hotel-stars: 3, hotel-internet: yes
    [system] the hobsons house is available in that area .
    Q: [user] that sounds like it will work . can i book that for 3 nights
    starting wednesday ?
    SQL: SELECT * FROM hotel WHERE book_day = wednesday AND book_people = 1 
    AND book_number_of_days = 3 AND name = hobsons house;
    ......
\end{lstlisting}
\textbf{Output}
\begin{lstlisting}
    hotel WHERE book_day = wednesday AND book_number_of_days = 4 AND name = 
    warkworth house;
\end{lstlisting}

\subsubsection{GeoQ}
\textbf{Input}
\begin{lstlisting}[language={}]
    CREATE TABLE "border_info" ("state_name" text, "border" text)
    /*
    state_name    border
       alabama tennessee
       alabama   georgia
       alabama   florida
    */
    ......
    -- Using valid SQLite, answer the following questions for the tables 
    provided above.
    ......
    -- what is the longest river in the state with the highest point
    SELECT
\end{lstlisting}
\textbf{Output}
\begin{lstlisting}
    RIVERalias0.RIVER_NAME FROM HIGHLOW AS HIGHLOWalias0, RIVER AS 
    RIVERalias0 WHERE HIGHLOWalias0.HIGHEST_ELEVATION = (SELECT MAX(
    HIGHLOWalias1.HIGHEST_ELEVATION) FROM HIGHLOW AS HIGHLOWalias1 ) AND
    RIVERalias0.TRAVERSE = HIGHLOWalias0.STATE_NAME ORDER BY RIVERalias0.
    LENGTH DESC LIMIT 1;
\end{lstlisting}
\subsubsection{Xsum}
\textbf{Input}
\begin{lstlisting}[language={}]
    For decades, large numbers of Haitians have migrated - many of them without papers - to the Dominican Republic, to escape the poverty and lack of employment in their homeland.\nIn 2013, the Dominican Republic's highest court ruled that children born there to undocumented migrants were not automatically eligible for Dominican nationality.
    ......
    \nThere he strips the trees for firewood to make charcoal, to sell to Dominican traders for a few dollars.\nHe knows the practice damages the fertility of the soil, but it's the only available source of income.\n\"This is the only way we can survive,\" he says, motioning at his family, stuck inside the world's forgotten migrant crisis.\nYou can hear more of Will Grant's report on Heart and Soul on the BBC World Service.
\end{lstlisting}
\textbf{Output}
\begin{lstlisting}
    Immigration has long been a divisive issue on Hispaniola, the Caribbean island shared by Haiti and the Dominican Republic.
\end{lstlisting}

\subsection{Implementation details}\label{app:imp_details}

\textbf{General experimental conditions.} 
We primarily use PyTorch~\cite{paszke2019pytorch} to implement our algorithm and baselines. 
 For GPT-3.5-Turbo, we perform the experiments by calling the OpenAI API using a single Intel Xeon CPU.
The GPT-J 6B and GPT-Neo 2.7B models are from the Huggingface transformer library~\cite{wolf2019huggingface}. We run all our experiments of GPT-J 6B and GPT-Neo 2.7B on a single NVIDIA Tesla V100 (32GB) GPU.

\textbf{Details of getting unlabeled data.}
Since obtaining unlabeled examples in realistic scenarios is also a high-variance process, we follow the same setting as~\cite{su2022selective} to simulate the realistic setting. We perform selective annotations from 3k instances that are randomly sub-sampled from training data for each task. For each experiment, we repeat the sub-sampling process three times and average the results over all trials to ensure comprehensive evaluations. 

\textbf{Details of the graph construction.}
Except for the illustration experiment in Figure~\ref{fig:demon}, we construct the directed graph for all unlabeled data by connecting each vertex to its 10 nearest successors ($k=10$). It is important to note that a larger $k$ will lead to an increase in the computation cost. We have chosen this setting because it provides good performance while maintaining efficient computation costs.
For Figure~\ref{fig:demon}, we construct the graph by connecting each vertex to its 3 nearest successors in order to avoid denseness.

\textbf{Details of Algorithm~\ref{alg:influence_function}.}
Considering the randomness of the diffusion process, when quantifying the influence of the subset, we run Algorithm~\ref{alg:influence_function} 10 times and use the averaged influence value. Note that we also calculate the time cost in this repeated process when reporting the final results in the main paper.  As shown in Figure~\ref{fig:cost}, our algorithm is still more effective than Vote-$k$.

\section{Limitations}
\textbf{Memory cost.}  Although in-context learning tasks avoid the heavy parameter update process, they still require a large amount of memory to load models. For example, loading GPT-J 6B into a GPU requires about 23GB GPU memory, without considering the size of the dataset. This is a relatively high cost for individual researchers.

\textbf{Time cost of Auto-IDEAL.} Although Auto-IDEAL achieves even better performance than IDEAL, it has the same drawback as Vote-$k$. That is to say, when making automatic annotations, it incurs the cost of making predictions for all unlabeled data. Future work may study how to maintain superior performance while reducing the automatic annotation cost of IDEAL at the same time. Compared with~\cite{su2022selective}, we do not evaluate NQ~\cite{kwiatkowski2019natural} due to budget constraints.

\end{document}